\mathchardef\mhyphen="2D
\newcommand{\vertiii}[1]{{\left\vert\kern-0.25ex\left\vert\kern-0.25ex\left\vert #1
    \right\vert\kern-0.25ex\right\vert\kern-0.25ex\right\vert}}
\newcommand{\vect}[1]{{\boldsymbol{#1}}}
\def\balpha{\vect{\alpha}}
\def\bg{{\mathbf{g}}}
\def\bu{{\mathbf{u}}}
\def\bx{{\mathbf{x}}}
\def\bF{{\mathbf{F}}}
\def\bU{{\mathbf{U}}}
\def\bX{{\mathbf{X}}}
\def\cA{\mathcal{A}}
\def\cG{\mathcal{G}}
\def\cH{\mathcal{H}}
\def\cM{\mathcal{M}}
\def\cO{\mathcal{O}}
\def\cR{\mathcal{R}}
\def\cS{\mathcal{S}}
\LetLtxMacro\oldttfamily\ttfamily
\DeclareRobustCommand{\ttfamily}{\oldttfamily\csname ttsize\endcsname}
\newcommand{\setttsize}[1]{\def\ttsize{#1}}%
\def\ttS{\texttt{S}}
\def\ttT{\texttt{T}}
\newcommand{\ope}{WIS}
\DeclareMathOperator*{\argmax}{arg\,max}
\DeclareMathOperator*{\argmin}{arg\,min}
\DeclarePairedDelimiterX{\infdivx}[2]{(}{)}{%
  #1\;\delimsize\|\;#2%
}
\newcommand{\infdiv}{\text{KL}\infdivx}
\newcommand{\doo}{\mathrm{do}}
\newcommand{\model}{\mathcal{M}}
\definecolor{alggreen}{RGB}{36,135,74}
\definecolor{algblue}{RGB}{29,138,168}
\definecolor{algorange}{RGB}{125,75,0}
\title[Counterfactually Guided Policy Transfer in Clinical Settings]{Counterfactually Guided Policy Transfer in Clinical Settings}
\author{%
\Name{Taylor W. Killian} \Email{twkillian@cs.toronto.edu}\\
\addr University of Toronto, Vector Institue, Canada
\AND
\Name{Marzyeh Ghassemi} \Email{mghassem@mit.edu}\\
\addr Massachusetts Institute of Technology, United States of America
\AND
\Name{Shalmali Joshi} \Email{shalmali@seas.harvard.edu}\\
\addr CRCS Harvard University (SEAS), United States of America
}
\begin{document}

\maketitle

\begin{abstract}
Domain shift, encountered when using a trained model for a new patient population, creates significant challenges for sequential decision making in healthcare since the target domain may be both data-scarce and confounded.
In this paper, we propose a method for off-policy transfer by modeling the underlying generative process with a causal mechanism. We use informative priors from the source domain to augment counterfactual trajectories in the target in a principled manner. We demonstrate how this addresses data-scarcity in the presence of unobserved confounding. 
The causal parametrization of our sampling procedure guarantees that counterfactual quantities can be estimated from scarce observational target data, maintaining intuitive stability properties.
Policy learning in the target domain is further regularized via the source policy through KL-divergence. 
Through evaluation on a simulated sepsis treatment task, our counterfactual policy transfer procedure significantly improves the performance of a learned treatment policy when assumptions of ``no-unobserved confounding" are relaxed.
\end{abstract}

\paragraph*{Data and Code Availability}
We use data derived from a Sepsis simulator\footnote{\url{https://github.com/clinicalml/gumbel-max-scm}} to demonstrate challenges that partial observability presents when learning treatment policies~\citep{oberst2019counterfactual}. This simulator approximates patient physiology (discretized measurements of heart rate, blood pressure, oxygen concentration, and glucose levels) in response to medical interventions and whether the patient is diabetic. Possible treatments include antibiotics, vasopressors, and mechanical ventilation. Our code, used to augment the simulator and develop the approach described in this paper can be found at~\url{https://github.com/MLforHealth/counterfactual_transfer}.

\section{Introduction}
\label{sec:intro}

As the development of machine learning algorithms matures there is increasing interest in deploying models to complex clinical domains~\citep{ghassemi2018opportunities}. These efforts include the application of reinforcement learning (RL) to sequential decision making and treatment recommendation~\citep{yu2021reinforcement}. However, domain shift between training (source) and deployment (target) patient populations~\citep{finlayson2021clinician} presents challenges largely unaddressed in recent RL work. 
In particular, we are concerned with shifting incidence proportions of (possibly unknown and confounded) comorbidities between independent clinical environments~\citep{subbaswamy2018counterfactual, subbaswamy2020development}. 
These challenges are amplified when few samples are available in the target domain since---for ethical and safety purposes---exploratory new data cannot be collected.
Naively learned treatment policies may overfit to data-collection artefacts~\citep{agniel2018biases}, fail to learn meaningful interventions~\citep{franccois2019overfitting}, or mistime appropriate interventions~\citep{bai2014early}. 
To provide reliable decision support and avoid such errors, principled methods are needed when transferring learned treatment policies between clinical environments.

In this paper we frame transfer in the context of offline, off-policy RL between a data-rich source domain to a data-scarce target domain, as we seek to learn robust policies from fixed observational data. 
We consider two main components of transfer: i) improving estimates of statistical quantities in the target domain, i.e. transition dynamics, and ii) adapting the policy learned within the source domain. We demonstrate that transfer in this setting can be naturally framed as a causal inference problem to answer the question, \emph{``How well can a previously trained policy perform in a new target domain with limited observational data?''}

We consider the effects of data-scarcity and confounding when improving the statistical estimation of physiological responses to treatments (otherwise known as the transition dynamics) of a target patient population. Sub-populations within the observed patient cohort (perhaps categorized by disease phenotype) may exhibit dissimilar behavior in response to treatment, the composition of which may differ in target domain. When critical information about sub-populations is unavailable between domains---creating a measure of unobserved confounding and model misspecification---the accuracy of estimated dynamics will be further constrained. 
To address this, we propose a stochastic regularization of the estimated transition dynamics in the target domain using the estimates derived from the source domain, motivated from principles of counterfactual estimation~\citep{pearl2009causality}. 
We use this \emph{counterfactual regularization} to provide a form of guided exploration in the target domain as a way to improve the estimated transition statistics.

The second component of transfer is an intelligent use of the source policy, $\pi^{(\texttt{S})}$. Even with extensive exploration, a learned policy in the target domain may fail to converge or learn safe interventions due to regions of low data support~\citep{gottesman2019guidelines} or an inaccurate dynamics model~\citep{sutton2018reinforcement}. To address this, we guide the development of the target policy $\pi^{(\texttt{T})}$ through regularization with $\pi^{(\texttt{S})}$. Trained with more data, $\pi^{(\texttt{S})}$ has been exposed to a more accurate estimate of the dynamics as well as observations not present in the target domain and serves to stabilize $\pi^{(\texttt{T})}$. By \emph{regularizing policy learning}, we avoid undue overconfidence when determining correct treatment decisions in the target. 

We propose a novel approach for policy transfer via a dual-regularization approach in offline settings. Specifically:
\begin{enumerate}
    \item We leverage complementary elements from the source domain to support guided counterfactual sampling in the target domain which facilitates better policy learning with limited data.
    \item We prove that our transfer method, Counterfactually Guided Policy Transfer (CFPT), maintains important stability properties.
    \item We demonstrate, with a simulated clinical task, that CFPT obtains notable performance gains (up to 3x improvement) across domain-shifted and confounded environments. 
\end{enumerate}

\section{Related Work}\label{sec:related_work}

{\bf RL in Health}  The use of RL has been explored in healthcare to develop optimal treatment strategies~\citep{yu2021reinforcement}, despite challenges presented by likely confounded data~\citep{gottesman2019guidelines}. 
RL has been used to address schizophrenia~\citep{shortreed2011informing}, HIV~\citep{ernst2006clinical}, sepsis~\citep{komorowski2018artificial,raghu2018model,fatemi2021medical} and mechanical ventilation~\citep{prasad2017reinforcement}.
There has also been efforts to develop reliable evaluation of learned policies since they cannot be directly tested~\citep{kallus2018balanced,gottesman2019combining, futoma2020popcorn} and often fail to generalize beyond their training data~\citep{futoma2020myth}.

{\bf Transfer learning in RL} 
Transfer learning in RL can improve policy learning in independent target domains~\citep{taylor2009transfer}.
In healthcare settings, transfer learning may enable personalized treatment strategies~\citep{marivate2014quantifying,killian2017robust} and better generalization across clinical environments.  
However, challenges arise as domain shift may induce additional confounding. 
When observations are scarce, transition estimates are prone to error~\citep{mannor2004bias,fard2008variance} limiting the effectiveness of counterfactual inference (the investigation of plausible alternatives to observed data). 
To address this, we propose a novel way to incorporate inductive bias using the source domain's transition statistics indirectly---through counterfactual inference---to leverage sub-spaces of observations that may not be in the target domain.

{\bf Causal Inference in ML} Causal inference has been used to formalize counterfactual investigations of underlying data distributions~\citep{pearl2009causality} and has recently grown to be a major focus within offline RL~\citep{bannon2020causality}.
These foundational concepts provide benefits when addressing domain shift in supervised learning~\citep{rojas2018invariant,arjovsky2019invariant}, decision making~\citep{makar2020estimation,johansson2020generalization} and for policy reuse across multiple environments in simple bandit~\citep{bareinboim2014transportability,lee2018structural,lee2020generalized} and multi-agent settings~\citep{foerster2018counterfactual}. 
Yet, these methods require online data collection, not possible in clinical settings. 

Causal concepts have also been useful evaluating policies learned from observational data~\citep{athey2015machine,raghu2018behaviour} (including partially observed domains~\citep{tennenholtz2020off}). 
Counterfactual reasoning in RL has been used to infer individualized treatment policies in healthcare with hidden confounding as a proxy for missing data~\citep{parbhoo2018cause,parbhoo2020transfer} or long-term effects of treatment selection~\citep{schulam2017reliable}. Yet, each of these approaches rely on large and diverse training data.
Our proposed transfer framework specifically relies on inducing bias~\citep{hessel2019inductive} indirectly by leveraging causal frameworks to incorporate an informative prior from the source domain in a partially observed sequential decision making setup.

{\bf Offline RL} 
When learning from batch data, value function estimates to guide policy development are prone to overestimation~\citep{hasselt2010double} and high variance~\citep{romoff2018reward}. Various efforts regularize the policy learning process to maintain stability and limit extrapolation to states and actions not in the dataset~\citep{fujimoto2019off,kumar2019stabilizing}. Recent offline RL algorithms additionally regularize the learned policy to remain close to observed behavior~\citep{wu2019behavior,wang2020critic} through a KL-divergence penalty. 
We use a similar mechanism to constrain the target policy during learning via a form of regularized policy iteration~\citep{farahmand2016regularized}. To the best of our knowledge, our work is the first to leverage regularized policy iteration for transfer in an offline RL setting.

\section{Preliminaries}
{\bf Causal modeling in RL}\label{sec:scm_pomdp}
Clinical decision making is inherently a sequential process.
We model sequential decision making in this setting as a partially observed Markov decision process (POMDP) formalized by a Structural Causal Model (SCM)~\citep{buesing2018woulda}. An SCM $\cM$ describes the causal mechanisms of a system's observed variables $\bX$ by defining functions $\bF$ that govern the mechanisms, and accounting for independent stochasticity through exogenous, or external, noise variables $\mathbf{U}$. 
In the assumed causal graph, the nodes that  directly influence a variable $X_i$ are called the parents of $X_i$, $\textbf{PA}_i$. The structural equations $f \in \bF$ of $\cM$ define this relationship where {\small $X_i = f(\textbf{PA}_i, U_i)$}. Additional background is provided in the Appendix, Sec.~\ref{app:scm}.

 \begin{figure}[htbp!]
    \centering
    \includegraphics[width=0.85\linewidth]{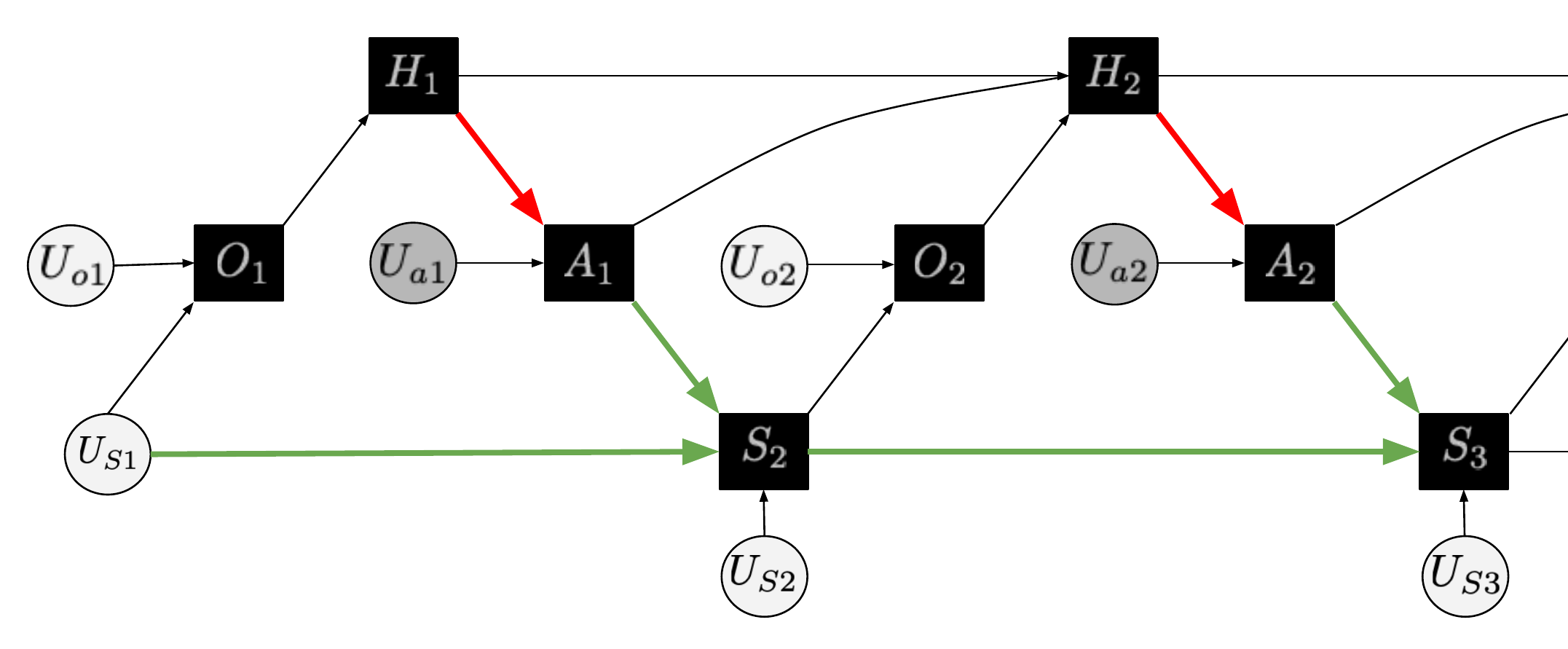}
    \caption{\small SCM of a POMDP from~\citet{buesing2018woulda}. White nodes denote unobserved variables, gray nodes denote observed latent variables and the black nodes are calculated quantities. 
    We assume this structure for both the source and target domain.}
    \label{fig:pomdp_scm}
\end{figure}

{\bf Notation} To facilitate counterfactual inference for transfer from a source domain $\ttS$ to a target domain $\ttT$, we consider finite-state, finite-action episodic POMDPs.
States are denoted as {\small $S_t \in \cS$}, observations by {\small $O_t\in \cO$}, and actions as {\small $A_t \in \cA$} with reward as {\small $R_t = \mathcal{R}(S_t,A_t)$} for {\small $t = \{0,1, \ldots, T\}$}. A POMDP can be represented as an SCM by expressing conditional distributions, e.g. state-transitions {\small $P(S_{t+1} | S_{t}, A_t)$}, as structural equations {\small $S_{t+1}=f(S_t, A_t, U_{St})$}~\citep{pearl2009causality}, shown in Figure~\ref{fig:pomdp_scm} (green edges). 
The relationship between $A_t$ and the observed history {\small $\cH_t=\{O_1, A_1, O_2, \ldots, A_{t-1}, O_t\}$} is governed by the behavior policy $\mu$ (i.e. {\small $\mu (A_t | H_t)$}, red edges) from which trajectories {\small $\tau~=~(S_1, A_1, O_1, \ldots, S_{t-1}, A_{t-1}, O_t)$} are collected with density {\small $p^{\mu}(\tau)$}.

If we choose to execute a learned policy $\pi$ after having observed the behavior policy $\mu$, the functional mapping of the red edges in Figure~\ref{fig:pomdp_scm} changes from $\mu$ to $\pi$. This soft-intervention is denoted by {\small $I'(\mu \rightarrow \pi)$} with the resulting SCM {\small $\cM^{do(I'(\mu \rightarrow \pi))}$}. This induces a modified probability distribution, $P^{do(I'(\mu \rightarrow \pi)) | {I(\mu)}}$ on the POMDP. We denote the corresponding ``counterfactual" random variables with subscripts $X_{I}$ for $do(\mu)$ and $X_{I'}$ for $do(\pi)$. The following procedure outlines how to estimate such a counterfactual distribution:

{\small
\begin{itemize}[leftmargin=*]
    \item[i)] Abduction: estimate posteriors over exogenous noise variables  {\small $p(\bU | \bX)$},~
    \item[ii)] Intervention: execute $\pi$ or $do(I(\mu \rightarrow \pi))$ as if the distribution of the exogenous variables is now fixed to the posterior estimates from step i). 
    \item[iii)] Estimation: estimate the joint distribution of the data, will correspond with $P^{do(I'(\mu \rightarrow \pi)) | {I(\mu)}}$.
\end{itemize} }

In our case, we do not need to characterize the complete distribution of the effect a policy has on the observed data. An estimate of the reward from executing $\pi$ instead of $\mu$ is sufficient. This expected reward under this counterfactual distribution can be estimated from trajectories sampled from $P^{do(I'(\mu \rightarrow \pi)) | {I(\mu)}}$. We denote this reward by $\mathbb{E} [\cR(\tau) | do(\pi)]$. 
This expected reward can also be used to evaluate the average treatment effect under these soft-interventions to determine the value of $\pi$ (see Sec.~\ref{sec:app_ace_connections} in the Appendix). 

In general, it is not always possible to estimate $P^{do(I'(\mu \rightarrow \pi)) | {I(\mu)}}$ and its corresponding expectations. However, we can estimate these quantities from observed data samples if we appropriately restrict the functional mappings $f$. One such choice of these mappings in the case of discrete or categorical states is the Gumbel-Max SCM.

{\bf Gumbel-Max Topdown Sampling.}\label{sec:topdown_sampling} The Gumbel-Max trick enables sampling from categorical distributions $\mathrm{Cat}(\alpha_1,\ldots,\alpha_K)$, where the category $k$ will be selected with probability $\alpha_k$ among K distinct categories~\citep{hazan2012partition,maddison2014sampling,maddison2016concrete}. This sampling procedure rests on inferring Gumbel variables $g_k$ that can be transformed into these probabilities $\alpha_k$.

Without any prior on the Gumbel variables $g^{(\ttT)}_k$, corresponding to the discrete patient states observed in a target domain $\ttT$, the location parameters can be obtained according to the empirical transition probabilities $P^{(\ttT)}$. That is, $p(\balpha) = \delta(\log{P^{\ttT}}(\cdot | \cdot))$ where $\delta$ is the dirac-delta distribution. Sampling from this Gumbel given observation $k'$ can be done using the Topdown procedure\footnote{\url{https://cmaddis.github.io/gumbel-machinery}}.

That is, for a fixed and known $\alpha_k$, the Gumbel corresponding to the observed outcome $k'$, i.e. $g^{(\ttT)}_{k'}$ is itself a Gumbel variable with location parameter $Z = \log{\sum_{k=1}^K \alpha_{k}}$. It follows that the maximum value $k'$ and corresponding Gumbels are independent and the rest of the exogenous variables $g^{(\ttT)}_k~\forall k \neq k'$ are truncated by this maximum value corresponding to $k'$. To leverage information from the source domain $\ttS$, we replace the dirac-delta prior by a mixture of the source and target transition statistics (see Sec.~\ref{sec:aug_gumbel}).

{\bf The Gumbel-Max SCM.}\label{sec:gumbel_max_scm}
\citet{oberst2019counterfactual} introduced the Gumbel-Max SCM, which ensures that counterfactual queries preserve observed outcomes (defined as \emph{counterfactual stability}). In a Gumbel-Max SCM all nodes $\bX$ are discrete random variables with causal mechanisms: 
{\small
\begin{equation}\label{eq:gumbel_max_scm}
    X_i \coloneqq \argmax_{j} \log{p(X_i = j | \textbf{PA}_i)} + g_j
\end{equation}
}
given independent Gumbel variables {\small $\bg~=~\{g_1, g_2, \ldots, g_k\}$}. These structural equations effectively embed the Gumbel-Max trick.  
We parametrize the state transition mechanism of the POMDP using the formulation of Eqt.~\ref{eq:gumbel_max_scm}. This means that exogenous variables are restricted to Gumbel variables such that $U_{S} \triangleq \bg$ (for all time-steps).

To ensure counterfactual expectations $\mathbb{E} [\cR(\tau) | do(\pi)]$ are identifiable from observational data from policy $\mu$, we need an additional property called Counterfactual Stability:

{\small
\begin{definition}{Counterfactual Stability}\label{def:stability}
An SCM over discrete random variables is counterfactually stable if:
$$\frac{p_i'}{p_i} \geq \frac{p_j'}{p_j} \ \Rightarrow \ P^{do(I') | X_{I}=i}(X=j) = 0, \quad \forall j \neq i$$ where $p_i = p(X_I=i)$ and $p'_{i} = p(X_{I'}=i)$.
\end{definition}
}

Defining the structural equations in this manner acts as a constraint on the POMDP, enforcing counterfactual stability (by definition of Gumbel-Max SCMs) when considering alternative state transitions. This ensures that inferred patient outcomes change only when the relative likelihoods also change. Our counterfactual regularization further maintains this property when sampling counterfactual trajectories in the target domain after incorporating source transition estimates, outlined in Sec.~\ref{sec:aug_gumbel}. In effect all intermediate quantities remain estimable from offline data, allowing principled offline transfer.

\section{Counterfactually Guided Policy Transfer}
\label{sec:policy_transfer}

In this section we introduce a framework for transferring learned treatment policies in offline settings; meaning we only have access to sequences of observations (trajectories) $\tau$ without the ability to interact with the intended target domain. By modeling the common generative process between domains with a causal mechanism we are able to constrain policy learning in the target to refrain from unsafe behaviors, even when presented with a different and unknown mixture of patient sub-populations. 

We now formalize the transfer setting. First, we assume that patients in the source and target domains have comparable health conditions. The primary shift between domains is in the composition of patient sub-populations. That is, we have different proportions of  patient types (e.g. the proportion diabetic patients) in each. We assume that the population composition is unknown, creating unobserved confounding in the underlying causal system.

We assume that the data has been collected previously in the source domain $\ttS$ with some unknown behavioral policy $\mu$ and that an optimal treatment policy {\small $\pi^{(\ttS)}$} has been learned. Further, we assume that the empirical transition matrix, {\small ${P}^{(\ttS)}$}, is accessible. Empirical transition statistics {\small ${P}^{(\ttT)}$} in the target are also available. 
We demonstrate that a learned treatment policy in $\ttT$ can be improved by \begin{inparaenum} \item[1)] appropriately leveraging {\small ${P}^{(\ttS)}$} to improve counterfactual transition estimation in $\ttT$ and \item[2)] regularizing {\small $\pi^{(\ttT)}$} by {\small $\pi^{(\ttS)}$}. \end{inparaenum}

In Sec.~\ref{sec:aug_gumbel} we motivate that data-scarcity and unobserved confounding in $\ttT$ induces model misspecification, requiring careful regularization of the transition statistics. 
We propose a stochastic regularization procedure to alleviate challenges of naively transferring $P^{(\ttS)}$. 
Our main theoretical contribution demonstrates that this procedure maintains counterfactual stability. 
In Sec.~\ref{sec:reg_pi} we outline a second form of regularization to stabilize policy learning in $\ttT$, to avoid overconfidence in regions of little support. These concepts are combined in Sec.~\ref{sec:full_method} to introduce our proposed transfer framework for offline, off-policy RL, Counterfactually Guided Policy Transfer (CFPT).

\subsection{Counterfactual Regularization} 
\label{sec:aug_gumbel}

When membership information of patient sub-populations are known, specific estimates of the transition statistics can be obtained in both domains. However, if the statistical bias in these estimates in $\ttT$ is larger for some sub-population, naive regularization from $\ttS$ can only guarantee improvement for the sub-group with more accurate estimates in $\ttS$ (see Appendix~\ref{app:regintuition}).

To improve estimates of the transition statistics $P^{(\ttT)}$, we need to collect more data from the appropriate counterfactual distribution i.e. $P^{do(I'(\mu \rightarrow \pi)) | {I(\mu)}}$. Since naive regularization of $P^{(\ttT)}$ is insufficient, we leverage exogenous variables in $\ttT$ (the Gumbel variables) related to $P^{(\ttT)}$. According to the SCM formulation, the true posterior over these variables is completely described by the true, yet unknown, transition probabilities. Thus estimates of $P^{(\ttT)}$ can be refined by improving the posterior estimates of the Gumbel variables in the ``Abduction'' step.
The transition statistics $P^{(\ttS)}$ are used to improve these posterior estimates which are then used to infer the Gumbels in $\ttT$. This is done with a stochastic mixture of the estimated statistics from both domains.

Our key insight is that this stochastic regularization is helpful even if the mixture membership information is not known. 
In this case, a composite transition estimate is obtained in both $\ttS$ and $\ttT$ (instead of for each sub-population) which enables a guided sampling procedure in $\ttT$ instead of merely relying on $P^{(\ttT)}$. 

\begin{algorithm}[t!]
    \caption{\small Modified Top-down with informative prior} 
    \label{alg:updated_topdown}
    \begin{algorithmic}[1]
        {\small
        \STATE {\color{darkgray} Repeat each step of a counterfactual rollout, infer $\tau^i$}
        \STATE {\color{darkgray} Note: - $\log{P^{(\texttt{S})}(s'|s,a)} = \log{\alpha}^{(\texttt{S})}$ }
        \STATE {\color{darkgray} \qquad - $\log\hat\alpha^{(\texttt{T})}$ are counterfactual stats via policy $\pi$}
        \STATE{\color{darkgray} \qquad - Sampled observation $k'$}
        \newline
        \STATE \textbf{\textsc{Mixture-Topdown}}(SCM $\model$, $\log{\alpha}^{(\texttt{S})}$, $\log{\alpha}^{(\texttt{T})}$, $\log{\hat{\alpha}}^{(\ttT)}$, mixture param $w^{\ttT}$,  $N'$)
        \STATE \hspace{.05em} {\color{darkgray}// Gather a batch of counterfactual trajectories}
        \FOR{$n'=1,\ldots, N'$}
                \STATE $\rho \sim Bernoulli(w^{(\ttT)})$ 
                \STATE $\log{\alpha} = \rho \log{\alpha^{\ttT}} + (1-\rho) \log{\alpha}^{\ttS}$ 
                \STATE $g_{cf} = \text{Topdown}(\log{\alpha}, 1,k')$
            \STATE $S_{cf}^{n'} = {\argmax}_j \log{\hat{\alpha}^{(\ttT)}} + g_{cf}$ 
            \ENDFOR
            \STATE $\hat{P}^{(\ttT)}$ is the empirical estimate using $\{S_{cf}^{n'}\}_{n'=1}^{N'}$
            }
    \end{algorithmic}
\end{algorithm}

Concretely, we we estimate and employ the posterior ${\small p(\bg^{(\ttS)} | \tau^{(\ttS)})}$ from $\ttS$ as an \emph{informative prior} for the target domain, i.e. {\small$p(\bg^{(\ttT)})=p(\bg^{(\ttS)} | \tau^{(\ttS)})$}. 
This prior is incorporated in a way that maintains \emph{counterfactual stability} in $\ttT$, allowing estimation of the expected rewards in the target domain under any candidate policy from observational data collected locally from a different policy $\mu^{\texttt{S}}$. Normally, sampling discrete state outcomes from transition dynamics when parametrized as Gumbel-max variables leverages a sampling procedure known as Top-down sampling. This sampling procedure is a key component for estimating the expected rewards of a policy in the target domain. We ensure stability by carefully designing a modified Top-down sampling procedure~\citep{maddison2014sampling} when sampling from the posterior over Gumbels {\small $\bg^{(\ttT)}$, i.e., 
\begin{align*}
    p(\bg^{(\ttT)} |\tau^{(\ttT)}, P^{(\ttS)}) &\propto p(\tau^{(\ttT)} | \bg^{(\ttT)})p(\bg^{(\ttT)}) \\
    \ &= p(\tau^{(\ttT)} | \bg^{(\ttT)})p(\bg^{(\ttS)} | P^{(\ttS)})
\end{align*}} given some observed trajectory {\small $\tau^{(\ttT)}$}.
The prior $p(\bg^{(\ttT)})$ corresponding to some state-action pair $s, a$ is given by {\small $p_{s,a}(\bg^{(\ttT)}) = \prod_{i=1}^K f_{\log{P^{(\ttS)}(S'=i|S,A)}}(g_i)$}, where {\small$f_{\log{\alpha}}$} is the density of a Gumbel random variable. 

\begin{figure*}[!htbp]
    \centering
    \includegraphics[width=.9\textwidth]{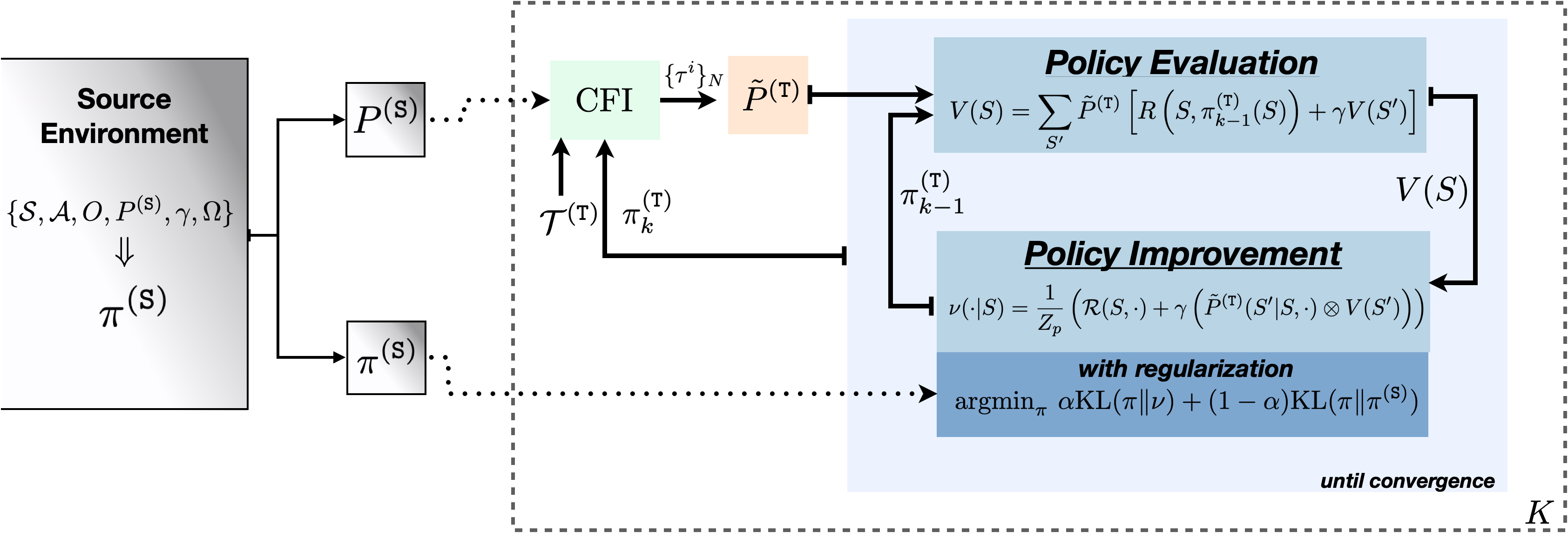}
    \caption{\small Graphical overview of counterfactually guided policy transfer (CFPT), as introduced in this section. 
    Elements from the source domain are used to improve counterfactual inference (CFI) and regularize policy learning within the target domain.}
    \label{fig:procedure_overview}
\end{figure*}

To leverage the prior from $\ttS$ we impose a mixture parametrization over the \emph{posterior} Gumbel distribution conditioned on an observation $k'$ (in $\ttT$):
{\small
\begin{align}
\begin{split}\label{eq:mixture_prior}
p(g^{(\ttT)}_1, \ldots, g^{(\ttT)}_n | k') = &w^{(\ttT)} p(g^{(\ttT)}_1, \ldots, g^{(\ttT)}_n | \log{P^{(\ttT)}}, k')\\ +& w^{(\ttS)} p(g^{(\ttT)}_1, \ldots, g^{(\ttT)}_n | \log{P^{(\ttS)}}, k')
\end{split}
\end{align}
}
where $w^{(\ttS)} =  1-w^{(\ttT)}$. The mixture weight $w$ ($w<1$) is treated as a hyper-parameter determining the amount of regularization provided by $\ttS$. This results in a modified Top-down sampling procedure, summarized in Alg.~\ref{alg:updated_topdown}. Specifically, line $8$ is used to select the Gumbel component from $\ttS$ or $\ttT$ with probability $w^{(\ttT)}$. This component is then provided to sample the Gumbels, given observation $k'$ from $\ttT$ (line $10$). The sample is then used to infer counterfactual states under observation $k'$, ensuring \emph{counterfactual stability} (line $11$). This modified Top-down sampling procedure provides stable counterfactual trajectories in $\ttT$ via regularization from $\ttS$ to form a batch of data to refine a treatment policy $\hat{\pi}^{(\ttT)}$ from. 
The resulting trajectories can be used to re-estimate transition dynamics in the target domain (Alg.~\ref{alg:CF_PI}, line $7$) and can also be thought of as a form of stable exploration in $\ttT$.

\begin{lemma}\label{sec:thm_stability}
The mixture-prior with Modified Top-down sampling preserves counterfactual stability.
\end{lemma}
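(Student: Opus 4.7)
The plan is to treat the Bernoulli switch $\rho$ in Algorithm~\ref{alg:updated_topdown} as an additional exogenous variable in the augmented SCM. Conditional on each realisation of $\rho$, the procedure reduces to a standard Top-down sampling step with a single log-probability vector (either $\log\alpha^{(\ttT)}$ or $\log\alpha^{(\ttS)}$) in the Gumbel posterior, followed by the deterministic argmax of line~11 using the counterfactual log-probs $\log\hat\alpha^{(\ttT)}$. In particular, the sampled $g_{cf}$ satisfies the support constraint $\log\alpha_{k'} + g_{cf,k'} \ge \log\alpha_j + g_{cf,j}$ for all $j$, where $\alpha$ is either $\alpha^{(\ttT)}$ or $\alpha^{(\ttS)}$ according to $\rho$. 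This reduces the lemma to an ``apply-then-marginalise'' instance of the original Gumbel-Max result.

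Next, I would invoke the Oberst--Sontag stability argument conditionally on $\rho$. That argument is purely algebraic: the posterior-support constraint gives $g_{j}-g_{k'}\le \log\alpha_{k'}-\log\alpha_{j}$; the counterfactual argmax under $\log\hat\alpha^{(\ttT)}$ returning $j\neq k'$ requires $g_{j}-g_{k'}>\log\hat\alpha^{(\ttT)}_{k'}-\log\hat\alpha^{(\ttT)}_{j}$; chaining yields $\hat\alpha^{(\ttT)}_{j}/\alpha_{j}>\hat\alpha^{(\ttT)}_{k'}/\alpha_{k'}$, which is the negation of the stability hypothesis. Running this chain once with $\alpha=\alpha^{(\ttT)}$ and once with $\alpha=\alpha^{(\ttS)}$ yields two conditional stability statements, one for each branch of the mixture.

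Finally, I would marginalise out $\rho$. Since
\[
P(S_{cf}=j\mid k')=w^{(\ttT)}P(S_{cf}=j\mid k',\rho=1)+w^{(\ttS)}P(S_{cf}=j\mid k',\rho=0),
\]
whenever the stability hypothesis holds for both pairs $(\alpha^{(\ttT)},\hat\alpha^{(\ttT)})$ and $(\alpha^{(\ttS)},\hat\alpha^{(\ttT)})$, both conditional probabilities vanish and hence so does the marginal, establishing the claim.

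The main obstacle I anticipate is pinning down the correct reading of ``observational distribution'' under the mixture prior of Eqt.~\ref{eq:mixture_prior}: the stability property transfers to the augmented SCM only in a componentwise sense (i.e.\ holding for both the source and target likelihood ratios), rather than for any single categorical obtained by mixing $\alpha^{(\ttT)}$ and $\alpha^{(\ttS)}$. I would therefore phrase the conclusion so that the ratio hypothesis is imposed against each component, and emphasise that this is consistent with the mixture parametrisation because the Bernoulli draw precedes the Top-down step and never alters the deterministic argmax structural equation on which the Gumbel-Max stability argument depends.
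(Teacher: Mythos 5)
Your proof is correct, and it rests on the same core mechanism as the paper's own argument (Sec.~\ref{sec:app_stability}): the Gumbel samples are shared between the factual support constraint and the counterfactual $\argmax$, so the Oberst--Sontag contrapositive chain goes through unchanged regardless of which prior generated those samples. Where you genuinely differ is in the bookkeeping around the mixture. The paper writes the factual constraint $\log p_i + g_i > \log p_j + g_j$ with $p_i$ described only as the target transition probability ``induced using the Mixture-prior,'' without specifying which categorical that is; your version instead conditions on the Bernoulli indicator $\rho$ of Alg.~\ref{alg:updated_topdown}, runs the chain separately with $\alpha=\alpha^{(\ttT)}$ and $\alpha=\alpha^{(\ttS)}$, and then marginalizes, concluding that the counterfactual switching probability vanishes whenever the likelihood-ratio hypothesis holds against \emph{both} mixture components. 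This componentwise reading is exactly what the sampling procedure delivers --- the support constraint on the sampled Gumbels is with respect to the selected component, not with respect to any single mixed categorical --- so your decomposition makes precise a point the paper's proof glosses over. What the paper's phrasing buys is a statement that looks formally identical to Definition~\ref{def:stability}; what your phrasing buys is an honest account of the hypothesis actually needed under the mixture parametrization of Eqt.~\ref{eq:mixture_prior}, at the cost of a nominally weaker-looking (two-ratio) premise. Both establish the lemma as the paper intends it.
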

\begin{proof}
Counterfactual stability is invariant to the choice of prior so long as the gumbel samples are fixed across interventions. Our modified Top-down sampling procedure ensures this. Hence, counterfactual stability is preserved through regularization. The complete proof is in Sec.~\ref{sec:app_stability} in the Appendix.
\end{proof}

\subsection{Regularized Policy Iteration}\label{sec:reg_pi}

The sampling procedure outlined in Section~\ref{sec:aug_gumbel} allows improved estimation of the target domain transition dynamics and evaluation of counterfactual rewards for candidate policies being considered for improvement. Policy iteration (PI) switches between evaluation and improvement steps that estimate then refine a value function $V$ and greedy policy $\pi$. Thus, the evaluation stage of PI can leverage our modified sampling procedure.  Generally, PI may not optimally converge if the MDP is partially observed (e.g. when critical sub-population information is unknown)~\citep{sutton2018reinforcement}. 
When learning a policy $\pi^{(\ttT)}$ in the target domain, the counterfactually sampled batch of trajectories improve the accuracy of the transition matrix used in the evaluation step of PI. 
However, acting greedily with respect to the inferred value function may encourage poor behavior. To guard against overconfident value estimates, we regularize the policy improvement step by $\pi^{(\texttt{S})}$.

We regularize PI (RegPI) in $\ttT$ through minimizing the KL-divergence between the policy distributions over actions, conditioned on the observed state. Due to the discrete and finite causal framework we use to model $P^{(\ttT)}$, the KL regularization is equivalent to log-aggregation~\citep{heskes1998selecting}. This approach is also functionally equivalent to the behavior regularization found in recent offline RL algorithms such as BRAC~\citep{wu2019behavior} and CRR~\citep{wang2020critic}. In this work the policies are not parametrized, so this regularization directly modifies the action distribution rather than constraining gradient updates.

Within the policy improvement step a proposal distribution $\nu(\cdot|s)$ over the actions is generated:
{\small
\begin{equation}
    \nu(\cdot|S) = \frac{1}{Z_p}\left(\mathcal{R}(S,\cdot) + \gamma\left(\tilde P^{(\texttt{T})}(S'|S,\cdot)\otimes\mathbf{V}(S')\right)\right)
\end{equation}
}
where $Z_p$ is a normalization constant and the operator $\otimes$ is used to indicate a Matrix-vector product such that $V(S')$ is combined with $\tilde P^{(\texttt{T})}(S'|S,\cdot)$,  for each action and possible successor state $S'$. We then seek the policy that minimizes the divergence between $\nu(\cdot|S)$ and $\pi^{(\texttt{S})}(\cdot|S)$. That is, 
{\small
\begin{equation}
    \pi^{(\texttt{T})}_{k-1} = \argmin_{\pi} \ \lambda~ \mathrm{KL}(\pi\|\nu) + (1-\lambda)~\mathrm{KL}(\pi\|\pi^{(\texttt{S})}) \label{eqt:kl_minimization}
\end{equation}
} where $\lambda$ is a hyperparameter, selected empirically to determine how much $\pi^{(\texttt{S})}$ influences $\pi^{(\texttt{T})}$. The derivation of Eqt.~\ref{eqt:kl_minimization} and how it is fully implemented are included in the Appendix (see Sec.~\ref{sec:app_kl_aggregation} and Alg.~\ref{alg:CFPT}).

\subsection{Counterfactual Policy Iteration}
\label{sec:full_method}
\begin{algorithm}[ht]
    \caption{Counterfactual Policy Iteration}
    \label{alg:CF_PI}
    \begin{algorithmic}[1]
    {\small
        \STATE \textbf{\textsc{CF-PI}}(SCM $\model$, $\pi^{(\texttt{T})}_0$, $\pi^{(\texttt{S})}$, $P^{(\texttt{S})}$)
            \FOR{$k=1,\ldots, K$}        
                \STATE \hspace{0.5em} {\color{alggreen}// Gather a batch of counterfactual trajectories}
                \hspace{-1em}\STATE $\{h^i\}_{i=1}^N\sim\mathcal H^{(\texttt{T})} \subset \mathcal T$ 
                \hspace{-1em}\STATE $\{\tau^i\}_{i=1}^N=\mathrm{CFI}(\{h^i\}_{i=1}^N, \model, I(\mu~\rightarrow~\pi^{(\texttt{T})}_{k-1}), \mathcal T, P^{(\texttt{S})})$ 
                
                \STATE \hspace{0.5em}{\color{orange} // Estimate transition stats $\hat P^{(\texttt{T})}\text{ from }\{\tau^i\}_{i=1}^N$}
                \hspace{-1em}\STATE $\tilde P^{(\texttt{T})} = \frac{1}{Z_{\mathrm{T}}}\left(\eta \ P^{(\texttt{T})} + (1-\eta) \ \hat P^{(\texttt{T})}\right)$ 
                \STATE \hspace{0.5em} {\color{algblue}// Regularized policy iteration with $\tilde P^{(\texttt{T})}$}
                \hspace{-1em}\STATE $\pi^{(\texttt{T})}_{k}\leftarrow \mathrm{RegPI}(\pi^{(\texttt{T})}_{k-1}, \gamma, \tilde P^{(\texttt{T})}, \pi^{(\texttt{S})}, \lambda)$
            \ENDFOR
            }
    \end{algorithmic}
\end{algorithm}

We introduce counterfactually augmented policy iteration (CF-PI), the core method of our proposed CFPT framework, the major components of which have been outlined in the previous two subsections.
CF-PI is visualized in Figure~\ref{fig:procedure_overview} and outlined in Alg.~\ref{alg:CF_PI}. When learning a policy in $\ttT$, where a limited number of trajectories {\small $\mathcal{H}^{(\ttT)}$} have been collected with an unknown behavior policy $\mu^{(\ttT)}$, we assume access to an optimal policy distribution {\small $\pi^{(\ttS)}$} as well as transition statistics {\small $P^{(\ttS)}$} from a relevant source domain. In practice {\small $P^{(\ttS)}$} may correspond to expected patient physiological responses to treatment while {\small $\pi^{(\ttS)}$} reflects known treatment protocols.

CF-PI is performed over $K$ iterations where, in each iteration, a batch of counterfactual trajectories {\small $\{\tau^i\}$} from $\ttT$  (Sec.~\ref{sec:aug_gumbel})---sampled according to the current policy {\small $\pi^{(\ttT)}_{k-1}$}---are used to augment the transition statistics {\small $P^{(\ttT)}$}. This augmentation {\small (Alg.~\ref{alg:CF_PI}, line 5)} is a re-normalized weighted sum between the observed {\small $P^{(\ttT)}$} and {\small $\hat P^{(\ttT)}$} estimated from $\{\tau^i\}$. The parameter $\eta$ is empirically chosen (see Sec.~\ref{sec:app_eta_analysis} in the Appendix) to heavily favor observed transition statistics while still incorporating added diversity through counterfactual sampling. {\small $Z_T$} is the normalizing constant over all successor states {\small $S'=s'$} from any given state $s$. 
{\small $P^{(\ttT)}$} is then used in regularized Policy Iteration ($\mathrm{RegPI}$, Sec.~\ref{sec:reg_pi}) to update the policy {\small $\pi_k^{(\ttT)}$}.  
$\mathrm{RegPI}$ is run to convergence or for a set number of iterations. The resulting policy {\small $\pi^{(\ttT)}_k$} is used to sample additional counterfactual trajectories at the beginning of the next iteration.

We describe the full CFPT procedure in extensive detail (including complete psuedocode) in the Appendix,  Sec.~\ref{sec:app_CFPT}.

\section{Experimental Setup}\label{sec:experiments}

We demonstrate the benefits of CFPT through a simulated task of providing treatment to septic patients~\citep{oberst2019counterfactual}. We construct domain shift in the simulator by varying the proportions of diabetic patients between $\ttS$ and $\ttT$. Diabetic patients are more challenging to treat due to increased stochasticity in their glucose levels following treatment. 
Discharge (reward of $+1$) occurs when all vitals are `normal' and treatment is discontinued; death (reward of $-1$) occurs if any three of the vitals are simultaneously not `normal'.

{\bf Baselines:}\label{sec:baselines}

Since generalization is guaranteed when all confounding is observed~\citep{wen2014robust}, we hide diabetes status, inducing unobserved confounding. 
This mimics realistic clinical settings where relevant information may not be immediately available. We intend to verify the robustness of CFPT in the target domain $\ttT$ even when the regularization procedure is not guaranteed to be counterfactually stable (i.e. in the presence of unobserved confounding). We compare to several baselines: \begin{inparaenum} \item[i)] {\bf{{\small \textsc{Scratch}}}}, the policy {\small $\pi^{(\ttT)}$} is learned using policy iteration (PI) solely from the observed trajectories {\small $\mathcal{H}^{(\ttT)}$}. \item[ii)] {\bf{{\small \textsc{Pooled}}}} pools the observed {\small $\mathcal{H}^{(\ttS)}$} and {\small $\mathcal{H}^{(\ttT)}$} to learn {\small $\pi^{(\ttT)}$}, analogous to naive regularization of $P^{(\ttT)}$ by pooling data. \item[iii)] {\bf{{\small \textsc{Blind}}}} applies $\pi^{(\ttS)}$ in $\ttT$ without adaptation\end{inparaenum}. 

We also compare CFPT to two ablations showcasing the benefits of each contribution outlined in Sec.~\ref{sec:policy_transfer}. \begin{inparaenum} \item[iv)] {\bf{{\small \textsc{RegPI}}}} omits counterfactual trajectory sampling, only regularizing $\pi^{(\ttT)}$ by $\pi^{(\ttS)}$ (cf. Sec.~\ref{sec:reg_pi}), which is functionally equivalent to the tabular setting of CRR~\citep{wang2020critic}. 
\item[v)] {\bf{{\small \textsc{Red. CFPT}}}} is a reduced form of CFPT where we omit the informative prior from $\ttS$ 
when sampling counterfactual trajectories. Here, the counterfactual trajectories are drawn according to the Gumbel variables from $\ttT$ only. Policy learning is then completed with $\mathrm{RegPI}$. \end{inparaenum}
All settings used to train these policies are included in the Appendix, Sec.~\ref{sec:app_experiments}.

{\bf Setup:}
The behavior policy $\mu$ was found using PI with full access to the MDP (including diabetes state) to provide a strong observation policy, following~\citep{oberst2019counterfactual}. When generating the observed trajectories $\mathcal{H}$, the policy takes random actions w.p. $0.15$ to introduce variation. Within $\ttS, \ |\mathcal{H}^{(\ttS)}|=10000$, with at most $20$ steps per trajectory, where the probability of a trajectory coming from a diabetic patient is $0.1$.
We limit $|\mathcal{H}^{(\ttT)}| = 2000$ and shift the patient distribution to include a varying proportion of diabetic trajectories in range [0.0, 1.0] in 0.1 increments. 

To avoid extrapolation error~\citep{fujimoto2019off} when learning $\pi^{(\ttT)}$, all estimated transition statistics corresponding to actions not found in the data are zeroed out and the empirical transition matrix is renormalized. Further, the PI procedure is penalized when unsupported actions are taken; the trajectory is terminated and a negative reward is returned.

We evaluate the performance of CFPT when:
{\small
\begin{itemize}[leftmargin=*]
    \item Varying the amount of domain shift in $\ttT$, demonstrating that CFPT performs well relative to the baselines even as the patient distribution in $\ttT$ is shifted farther from $\ttS$.
    \item Varying the size of the patient cohort in $\ttT$, evaluating how data-scarcity affects the observed benefit of CFPT. 
\end{itemize}
}

\section{Results}
\label{sec:results}
\subsection{CFPT is Robust Under Domain Shift} 

We evaluate CFPT and the baselines defined on several settings of $\ttT$ where the proportion of trajectories gathered from of diabetic patients in {\small $\mathcal{H}^{(\ttT)}$} is increased in increments of $0.1$.
The prevalence of diabetic patients and with few trajectories, the estimated transition statistics $P^{(\ttT)}$ are far from the truth. This provides an opportunity to demonstrate the benefits of careful transfer from the source domain $\ttS$.

\subsubsection{Robustness of CFPT improvement}

\begin{figure}[!t]
    \centering
    \includegraphics[width=0.95\linewidth]{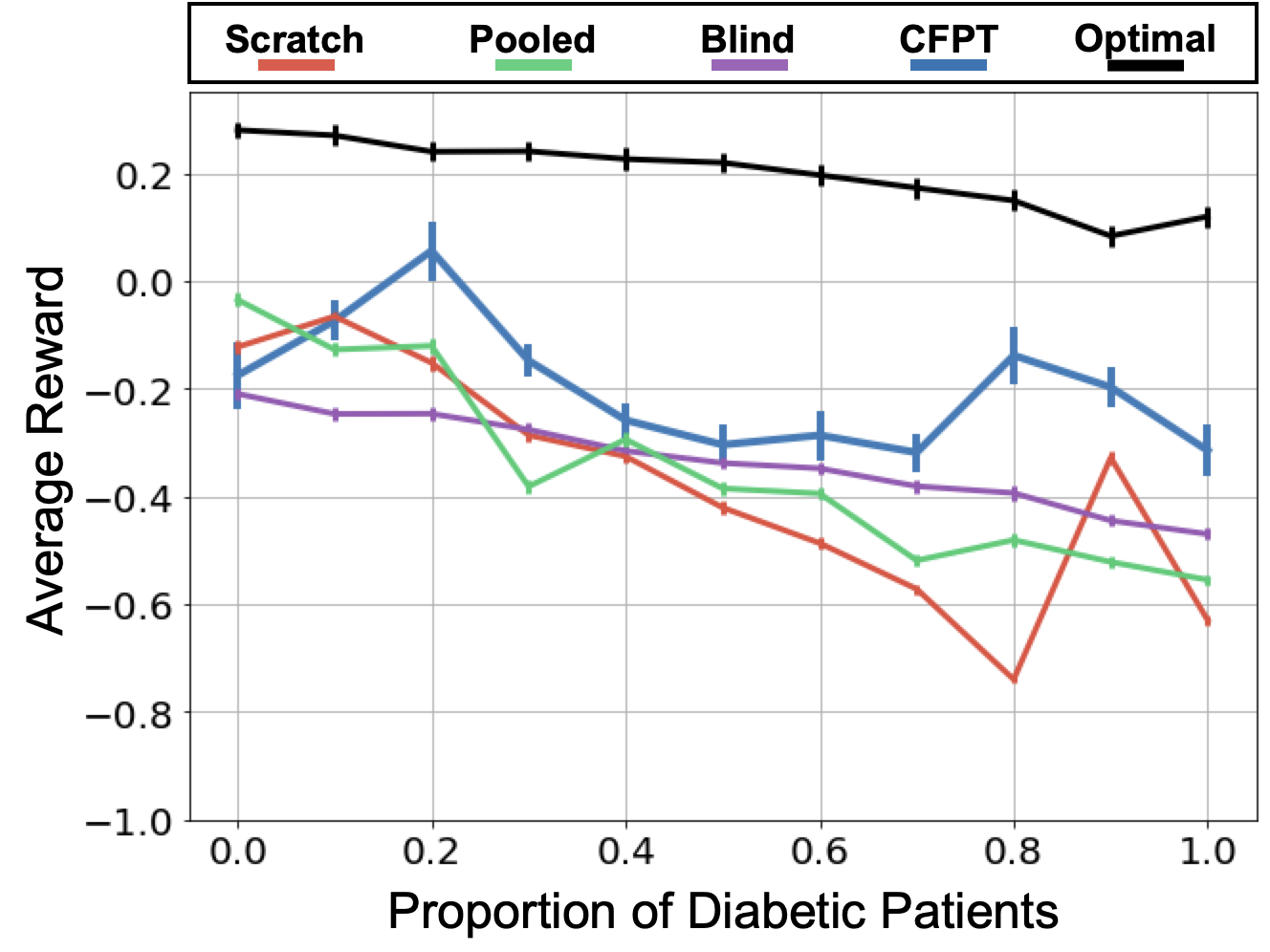}
    \caption{\small Comparison of CFPT with the defined baselines when varying the proportion of diabetic patients in $\ttT$. The black line denotes the observed optimal behavior with full knowledge.} 
    \label{fig:robustness_results}
\end{figure}

Figure~\ref{fig:robustness_results} shows the average reward when applying the learned $\pi^{(\ttT)}$ to simulate an additional $5000$ trajectories across the various shifts in patient population in $\ttT$. The performance of $\pi^{(\ttT)}$ learned with the various baseline strategies is presented alongside the observed optimal behavior $\mu^{(\ttT)}$ (with full knowledge of patient state) as the solid black line. The benefits of CFPT (in blue) are clear across all levels of domain shift, with significant performance improvement when mixture populations in $\ttT$ are the furthest from $\ttS$.

Diabetic patients are harder to treat in this simulator, resulting in a decreasing trend in average reward as the proportion of diabetic patient trajectories increases. For CFPT, the advantages of leveraging $\pi^{\ttS}$ in a domain distributionally similar to $\ttS$ (pDiab$=0.3$) are clear. However, in domains $\ttT$ where the patient distribution is shifted far from $\ttS$, CFPT achieves similar policy improvements. 
The clearest advantage of CFPT is when $\ttT$ has a majority of diabetic patient trajectories. This demonstrates that the causal framework and use of counterfactual regularization provide significant benefits when transferring from $\ttS$. 
Overall, this quantitative evaluation is a strong indication of the benefits of our proposed two-fold regularization when faced with domain shift between domains.

\subsubsection{Ablation Study for CFPT}

\begin{figure}[!t]
    \centering
    \includegraphics[width=\linewidth]{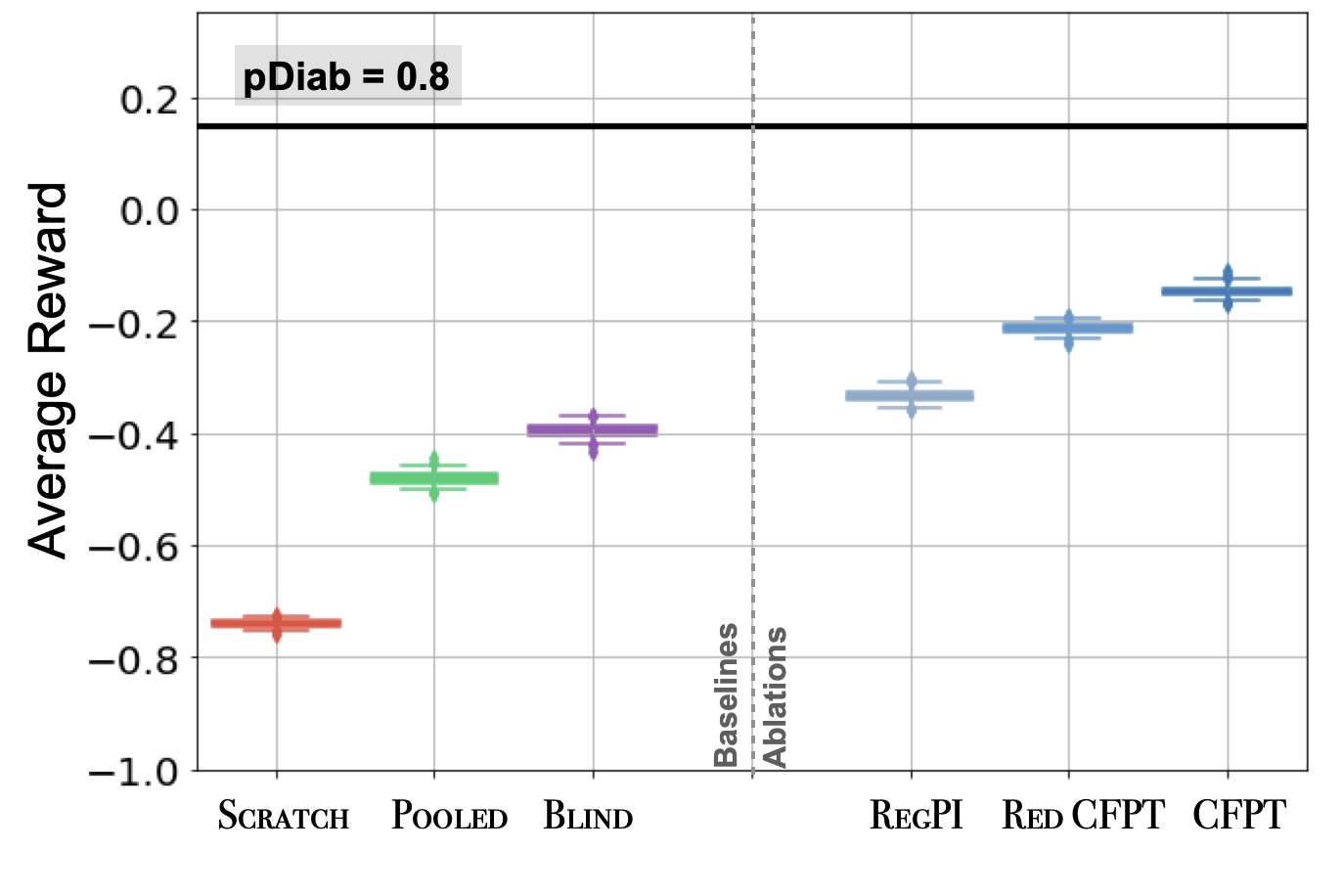}
    \vspace{-0.5cm}
    \caption{\small Comparison of estimated reward in $\ttT$ between CFPT, baselines and, ablations as outlined in Sec.~\ref{sec:baselines}. 95\% uncertainty intervals are found via 100 bootstrapped samples of the 5000 trajectories generated with the learned policy $\pi^{(\ttT)}$.}
    \label{fig:comp_rl_results}
\end{figure}

In Figure~\ref{fig:comp_rl_results} we view the performance of CFPT, the baselines, and ablations in a setting of $\ttT$ with a $0.8$ proportion of diabetic patient trajectories.
The ${\bf{\small \textsc{Pooled}}}$ and ${\bf{\small \textsc{Blind}}}$ baselines provide significant improvements over ${\bf{\small \textsc{Scratch}}}$. With each additional contribution we make in the development of CFPT ${\bf{\small (\textsc{RegPI}} \rightarrow {\small \textsc{Red. CFPT}} \rightarrow {\small \textsc{CFPT})}}$ policy performance steadily improves and approaches the observed return of the optimal behavior policy derived with complete knowledge of MDP and patient diabetes state.

Recall that the {\small \textsc{RegPI}} ablation is functionally equivalent to the recent state of the art offline RL method CRR~\citep{wang2020critic}. The observed improvement over this algorithmic approach demonstrates the value of our proposed regularization for counterfactual trajectory sampling. This further validates the use of causal mechanisms when constructing a transfer approach for offline RL settings. 

\subsubsection{Off-policy Evaluation}

The off-policy evaluation (OPE) of policies learned from fixed data, without the ability to independently test them is a challenging part of offline RL, and has been understudied in partially observed settings~\citep{tennenholtz2020off}. Importance Sampling (IS) can provide an estimate of policy performance with low bias for OPE~\citep{thomas2015safe}, which is desirable in a transfer setting. While we focus on true rewards as our primary evaluation in this paper, we provide OPE estimates in this section for completeness. For this, we use weighted importance sampling (\ope)~\citep{mahmood2014weighted} to evaluate our transfer policies due to its consistency properties. In the Appendix, Sec.~\ref{sec:app_policy_introspection} we use a counterfacutally determined OPE method, CF-PE~\citep{oberst2019counterfactual}, to qualitatively evaluate learned policies. 

OPE estimates generally exhibit significant overconfidence in expected rewards, as areas of high reward are erroneously extrapolated over unseen regions of the state space. We report the results of evaluating \ope~for the learned policies $\pi^{(\ttT)}$ in Table~\ref{tab:ablation_results} including comparisons to learning a policy in $\ttT$ where the diabetic status is known (``Full Obs.''), through Behavior Cloning (``BC'') and what the observed reward of the behavior policy $\mu^{(\ttT)}$. These results are provided for the setting of $\ttT$ with a 0.8 proportion of diabetic patient trajectories. As expected, WIS overestimates the true RL return in $\ttT$, even with poor policies (i.e. {\bf{\small{\textsc{Scratch}}}}). However, we see some semblance of improvement with each component of our proposed CFPT approach. However, the unreliability of these OPE estimates make it difficult to truly evaluate the benefits of transfer with counterfactual regularization.

\begin{table}[ht!]
\centering \small
\begin{tabular}{ccc}
\toprule
\bf{Approach}   & \bf{True RL Reward}      & \bf{WIS Reward}        \\ \hline
Scratch  & $-0.7398 \pm 0.007$ & $0.6388 \pm 0.584$  \\
Pooled   & $-0.4808 \pm 0.012$ & $0.9782 \pm 0.004$  \\
Blind    & $-0.3915 \pm 0.013$ & $0.5874 \pm 0.113$ \\ \hline
RegPI    & $-0.3366 \pm 0.012$ & $0.6266 \pm 0.057$  \\
Red. CFPT & $-0.2116 \pm 0.010$ & $0.7689 \pm 0.077$  \\
CFPT      & $-0.1491 \pm 0.011$ & $0.7333 \pm 0.004$ \\ \hline
Full Obs. & {\bf $-0.0877 \pm 0.012$} & $0.9037 \pm 0.054$ \\
BC & $-0.2078\pm0.0109$ & $0.9836 \pm 0.002$\\ 
Obs. $\mu^{(\ttT)}$ & $0.1486 \pm 0.018$ & -- \\
\bottomrule
\end{tabular}
\caption{\small Numerical values corresponding the policy performance results presented in Figure~\ref{fig:comp_rl_results}.}
\label{tab:ablation_results}
\end{table}

Fortunately, CF-PE allows for the comparison of individual counterfactual trajectories influenced by CFPT and other methods. This form of introspective evaluation can help identify glaring safety issues for deployment of a trained policy in a new domain. As seen in the Appendix, Sec.~\ref{sec:app_policy_introspection}, CFPT acts more conservatively and closely approximates the observed behavior, leading to more stable performance.

\subsection{CFPT Demonstrates Improvement Among Various Levels of Data-Scarcity in $\ttT$}

\begin{figure}[!ht]
    \centering
    \includegraphics[width=0.9\linewidth]{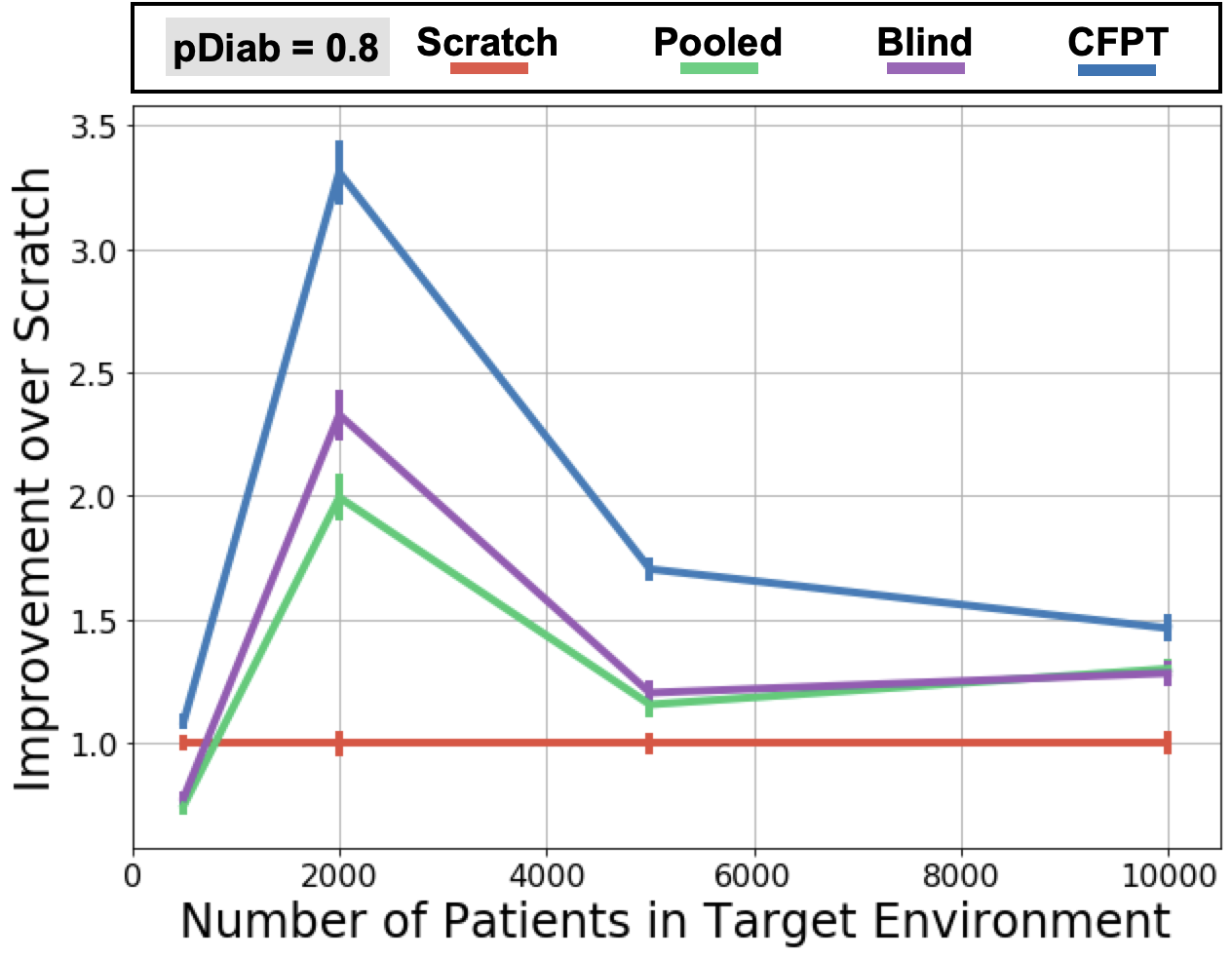}
    \caption{\small Performance improvement via transfer approaches over the naive {\footnotesize \textsc{Scratch}} baseline with respect to the number of trajectories available in $\ttT$. CFPT provides significant improvement when the size of the target domain is small relative to source domain.}
    \label{fig:improvement_by_target_size}
\end{figure}

The benefits of transfer may vary as more or less data is available in the observed $\mathcal{H}^{(\ttT)}$. Characterizing this benefit can aid understanding of the levels of regularization one should use for transferring from $\ttS$. This is particularly important when $\ttT$ may feature a significantly shifted data distribution, as we have simulated in this paper. Figure~\ref{fig:improvement_by_target_size} demonstrates the improvement of different transfer approaches over a ${\bf{\small{\textsc{Scratch}}}}$ policy as the size of $\mathcal{H}^{(\ttT)}$ changes. We evaluate the effects of transfer when {\small $|\mathcal{H}^{(\ttT)}|\in\{500, 2000, 5000, 10000\}$} with $\text{pDiab}=0.8$. When very few samples are available, transfer does not reliably improve over ${\bf{\small{\textsc{Scratch}}}}$, since there is little data to refine $\pi^{(\ttS)}$ with. As more samples are available, clear benefits are observed from transfer, with more than a 3x improvement when using CFPT.
These benefits diminish as more data is available in $\ttT$, allowing for an effective policy to be learned natively. We further analyze these policy improvements for the diabetic and non-diabetic sub-populations of $\ttT$ in the Appendix, Sec.~\ref{sec:app_subpopulation_analysis}.

\subsection{Quality of Counterfactual Samples}
\label{sec:counterfactual_samples}

We chose to use the Gumbel-Max SCM to initiate this version of our CFPT framework because it guarantees that counterfactual samples will lie within the support of $\mathcal{H}^{(\ttT)}$. It is not merely a qualitative formulation, as it provides stable sampling characteristics.         
We quantify the quality of these counterfactual samples by comparing i) target domain samples (collected with unknown $\mu^{(T)}$) and ii) target domain counterfactual samples using $P^{(S)}$ as a prior. In Figure~\ref{fig:maskedDiab} we compare the features when diabetes status is unobserved (a corresponding analysis when the diabetes status \emph{is} observed can be found in the Appendix, Sec.~\ref{sec:app_counterfactual_samples}). The counterfactually sampled data (on right) provides better coverage of the features while also not overly reducing the relative balance within the distribution of each feature. This helps to confirm the validity of using the counterfactually sampled trajectories when improving the robustness of the learned $\pi^{(\ttT)}$.

\begin{figure}[h!]
    \centering
    \includegraphics[width=0.9\linewidth]{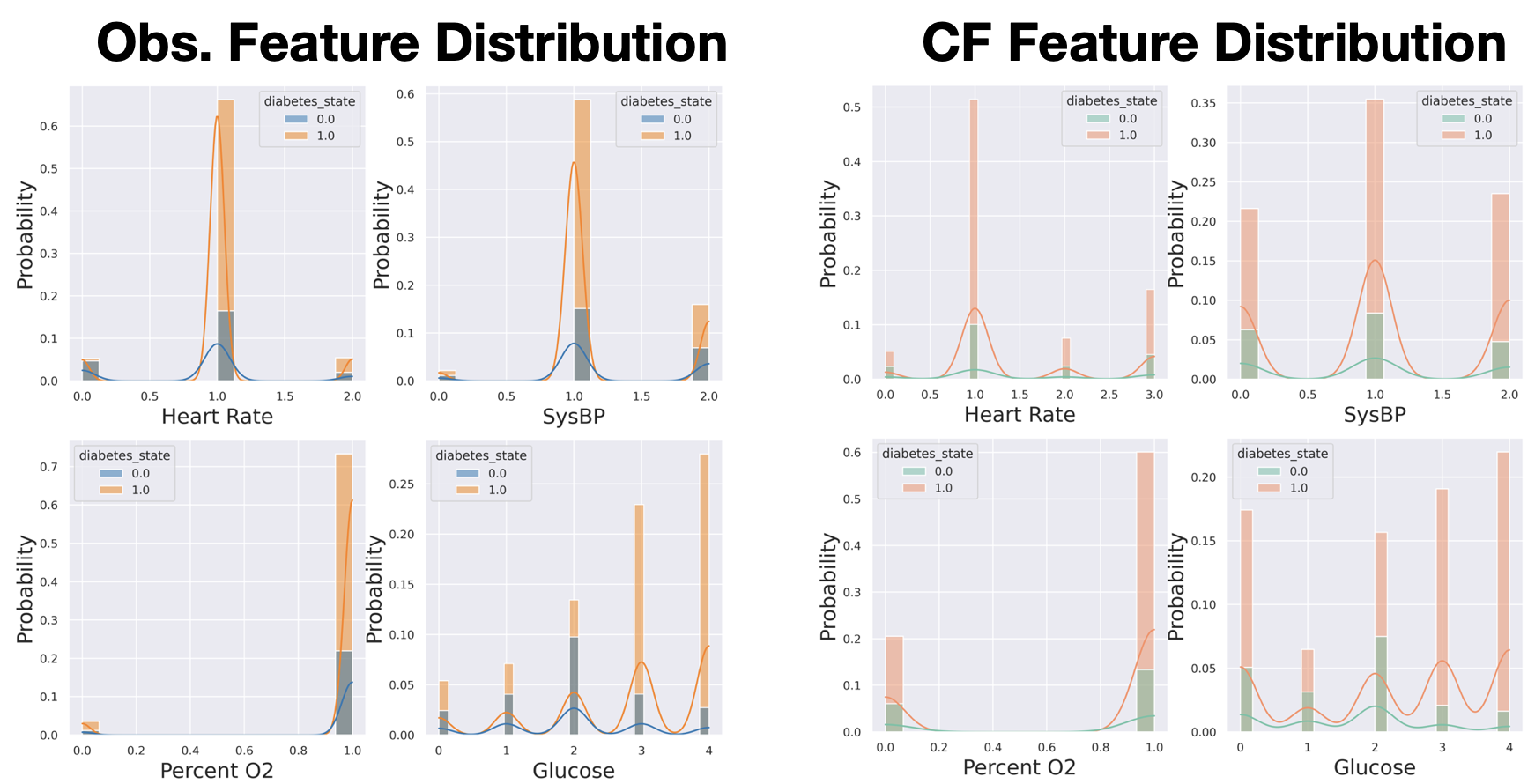}
    \caption{Feature distributions of the observed data (on left) and counterfactual samples (on right) with unobserved confounding}
    \label{fig:maskedDiab}
\end{figure}

\section{Conclusion}
Motivated by challenges of policy transfer in offline, off-policy clinical settings, we have introduced Counterfactually Guided Policy Transfer. This procedure leverages complementary elements of a data-rich source domain $\ttS$ to facilitate better learning in a data-scarce target domain $\ttT$. In our transfer framework we utilize: \begin{inparaenum} \item[1)] The observed transition statistics {\small $P^{(\ttS)}$} and \item[2)] the trained treatment policy {\small $\pi^{(\ttS)}$} to guide development of an effective policy {\small $\pi^{(\ttT)}$}\end{inparaenum}. 
By carefully designing transfer policies under restricted settings 
between domains we provide a principled justification for both the counterfactual and policy regularization frameworks we propose. 
In clinical practice, {\small $P^{(\ttS)}$} may correspond to expected patient physiological responses to treatment while {\small $\pi^{(\ttS)}$} reflects known treatment protocols. Both these elements can be feasibly shared in a secure manner and, as demonstrated by this work, used to improve treatment policy development.

In future work, we plan to adjust the regularization policies adaptively, based on the uncertainty of the transition statistics and treatment selection process. The work we have presented in this paper stands as an initial step in the development of counterfactually-aided policy transfer to reliably extend learned models beyond the domain they were trained in. While the discrete setting we have used in this work is suitable for a proof of concept, we intend to broaden the theoretical foundation supporting our procedure to admit continuous state spaces and treatments. This will support policy development using retrospective data derived from electronic medical records, moving us one step closer toward positively contributing to clinical practice.

\section*{Institutional Review Board (IRB)}
The research presented in this paper provides a proof of concept for a novel policy transfer method and is validated on simulated data. As such this research does not require IRB approval.

\acks{We thank the anonymous reviewers and our many colleagues who contributed to thoughtful discussions and provided timely advice to improve this work. We specifically appreciate the feedback provided by Sindhu Gowda, Sana Tonekaboni, Chun-Hao Chang, Elliot Creager, David Madras, Vinith Suriyakumar and Nathan Ng.

This research was supported in part by Microsoft Research, a CIFAR AI Chair at the Vector Institute, a Canada Research Council Chair, and an NSERC Discovery Grant.

Resources used in preparing this research were provided, in part, by the Province of Ontario, the Government of Canada through CIFAR, and companies sponsoring the Vector Institute \url{www.vectorinstitute.ai/\#partners}.}

\bibliography{refs}

\appendix

\section{Background: SCM}\label{app:scm}

\subsection{Structural Causal Models~\cite{pearl2009causality}}\label{sec:scm}
A structural causal model $\cM$ describes the causal mechanisms driving a system. It consists of an ordered triple $\langle \bU, \bX, \bF\rangle$; a set of independent exogenous random variables $\mathbf{U} = \{U_1, U_2, \ldots, U_k\}$ that represent factors of variation outside the model, $\bX$ comprises the endogenous variables modeled in the causal system and, the set of functions $\bF$ defined by $X_{i} \coloneqq f_{i}(\textbf{PA}_i, U_i) \ \forall i$ where $\textbf{PA}_i \subseteq \bX \setminus X_i$ govern the causal mechanisms. $\textbf{PA}_i$ are the parents of $X_i$ in a causal DAG $\cG$. The framework attributes probabilistic Markov assumptions to the joint distribution $P^{\cM}$associated with the variables $(\bX, \bU)$ in the graph.  This characterizes a probability distribution, implying that one can observe samples true to the underlying causal graph and mechanism. 

\begin{definition}{Interventional Distribution:}
An intervention $I$ in an SCM $\cM$ consists of replacing some functions $f_{i}(\textbf{PA}_i, U_i)$ with a different governing causal mechanism $f_{i}^I(\textbf{PA}_i^I, U_i)$ where $\textbf{PA}_i^I$ are the parents of $X_i$ in a new DAG $\cG^I$. Note that the interventional distribution does not change the exogenous mechanisms driving the system. The resulting SCM, denoted by $\cM^{do(I)}$ has a new joint distribution denoted by $P^{\cM^{do(I)}}$.
\end{definition}

An intervention $I$ is generally used to evaluate the \emph{prospective} effect of perturbing the underlying causal mechanism. A more useful quantity in off-policy learning is the \emph{counterfactual} which allows you to answer the causal queries of the form: ``what would have happened had we given the patient medication $b$ having observed no improvement with medication $a$?'' Answering such \emph{retrospective} queries requires inferring a model of the exogenous variables $P(\bU | \bX = \bx)$ and intervene with $I$ on a causal system with exogenous noise priors $p(\bU)$ replaced by $p(\bU | \bX=\bx)$.

\begin{definition}{Counterfactual Distribution:}
Let $\cM^{\bx}$ correspond to the SCM where the exogenous noise model $p(\bU)$ in $\cM$ is replaced by $p(\bU | \bX = \bx)$. Intervening with $I$ on the resulting SCM $\cM^{\bx}$ yields a new SCM $\cM^{do(I)|\bx}$ and induces the joint counterfactual distribution $P^{\cM^{do(I)| \bX = \bx}}$.
\end{definition}

\subsubsection{Connections between expected counterfactual reward and ACE/ATE}
\label{sec:app_ace_connections}

Naturally, to determine if a policy is better than a behavior policy $\mu$, the quantity of interest is the difference in expected rewards between the behavior policy $\mu$ and another policy $\pi$. In causal inference literature, this is analogous to evaluating average treatment effect (ATE) under \emph{soft} interventions in the underlying causal model. In our case this is a POMDP represented as a Structural Causal Model (SCM). Specifically, {\small $\mathrm{ATE}_\pi = E[\cR(\tau) | do(\pi)] - E[\cR(\tau) |do(\mu)]$} a quantity that can be interpreted as an outcome in the SCM. Note again that in off-policy settings, the first expectation term is obtained under the distribution $P^{do(I'(\mu \rightarrow \pi)) | {I(\mu)}}$ i.e. with modified posteriors over exogenous variables.

\subsection{Gumbel-Max SCM~\cite{oberst2019counterfactual}}\label{sec:app_gumbel_max_scm}

\begin{definition}{Gumbel-Max Trick:}
a sampling procedure from any discrete distribution with $k$ categories, parametrized by $p_i = P(X = i), \forall i \in \{1, 2, \ldots, k\}$. First, sample $k$ independent Gumbel variables $g_j$ with location 0, scale 1. Set the sampled outcome $k = \argmax_{j} \log{p_j} + g_j$.
\end{definition}

A Gumbel-Max SCM is one in which all nodes $\bX$ are discrete random variables. Given independent Gumbel variables $\bg = \{g_1, g_2, \ldots, g_k\}$, the causal mechanisms are given by: $X_i \coloneqq f_{i}(\textbf{PA}_i , g_i) = \argmax_{j} \log{p(X_i = j | \textbf{PA}_i)} + g_j$.

Non-identifiability of causal effect estimation under counterfactual scenarios is challenging for reliable transfer. That is, there may be multiple SCMs consistent with observations that provide different counterfactual estimates. In order to reliably draw causal conclusions from a counterfactual query, which is what we will need, further assumptions are required. In the case of binary SCMs, this assumption is given by the monotonicity condition~\cite{pearl2009causality}  and in the discrete case known as counterfactual stability.

Let $P^{do(I)}(Y=i) = p_i \ \forall i \in [L]$ and $P^{do(I')}(Y=i) = p_i' \ \forall i \in [K]$. Let $P^{do(I)}(X=i)$ be the probability of observing $i$ under intervention $I$ for variable $X$ in a discrete SCM and the observed outcome be represented by $X_{I}$. Then $P^{do(I') | X_{I}=i}(X=j)$ is the counterfactual probability of observing outcome $j$ having observed $i$ under intervention $I$.

\subsection{Gumbel-Max Topdown Sampling}\label{app:topdown}

The Gumbel-Max trick enables sampling from categorical distributions $\mathrm{Cat}(\alpha_1,\ldots,\alpha_K)$, where the category $k$ will be selected with probability $\alpha_k$ among K distinct categories~\citep{hazan2012partition,maddison2014sampling,maddison2016concrete}. This sampling procedure rests on inferring Gumbel variables $g_k$ that can be transformed into these probabilities $\alpha_k$.

The density of Gumbel variables with location parameter $\log{\alpha_k}$ and scale $1$ is:

{\small
\begin{align}\label{eq:density_gumbels}
\begin{split}
    f_{\log{\alpha_k}} (g_k) &= \exp{(-g_k + \log{\alpha_k})}\exp{(-\exp{(-g_k + \log{\alpha_k})})} \\
    & = \exp{(-g_k + \log{\alpha_k})}F_{\log{\alpha_k}}(g_k),
\end{split}
\end{align}
}

where $F_{\log{\alpha_k}}(g_k)$ is the CDF of the Gumbel variable $g_k$. 
Without any prior on the Gumbel variables $g^{(\ttT)}_k$, corresponding to the discrete patient states observed in a target domain $\ttT$, the location parameters can be obtained according to the empirical transition probabilities $P^{(\ttT)}$. That is, $p(\balpha) = \delta(\log{P^{\ttT}}(\cdot | \cdot))$ where $\delta$ is the dirac-delta distribution. Sampling from this Gumbel given observation $k'$ can be done using the Topdown procedure from~\cite{maddison2014sampling}.

Now consider the joint distribution of $k'$ and $\bg^{(\ttT)}$ for any fixed state-action pair (we drop explicit notation for clarity). To account for the informative prior $P^{(\ttS)}$, we treat the locations of these Gumbel variables to be random-variables $\balpha$. To obtain the joint distribution, we integrate over $\balpha$:
{\small
\begin{align}
    \begin{split}\label{eq:post}
        p(k', g^{(\ttT)}_1, \ldots, g^{(\ttT)}_n) &= \\ 
        \int_{\balpha} \frac{\alpha_{k'}}{Z} f_{\log{Z}}(g^{(\ttT)}_{k'}) &\prod_{i \neq k'} \bigg[  f_{\log{\alpha_i}}(g^{(\ttT)}_i) \frac{\llbracket g^{(\ttT)}_{k'} \geq g^{(\ttT)}_i\rrbracket}{F_{\log{\alpha_i}}(g^{(\ttT)}_{k'})} \bigg] p(\balpha) d\balpha 
    \end{split}
\end{align}
}

Equation (\ref{eq:post}) can be obtained exactly following\footnote{\url{https://cmaddis.github.io/gumbel-machinery}}~\cite{maddison2014sampling}.
That is, for a fixed and known $\alpha_k$, the Gumbel corresponding to the observed outcome $k'$, i.e. $g^{(\ttT)}_{k'}$ is itself a Gumbel variable with location parameter $Z = \log{\sum_{k=1}^K \alpha_{k}}$. It follows that the maximum value $k'$ and corresponding Gumbels are independent and the rest of the exogenous variables $g^{(\ttT)}_k~\forall k \neq k'$ are truncated by this maximum value corresponding to $k'$. To leverage information from the source domain $\ttS$, we replace the dirac-delta prior by a mixture of the source and target transition statistics (see Equation~\ref{eq:mixture_prior} in Sec.~\ref{sec:aug_gumbel}). The sampling procedure follows a modified top-down procedure such that for every counterfactual sample, we first select the mixture component with probability $[w^{(\ttT)}, 1-w^{(\ttT)}]$, followed by posterior sampling over the Gumbels.

\subsection{Mixture-prior preserves counterfactual stability}\label{sec:app_stability}

\begin{definition}{Counterfactual Stability:}
An SCM over discrete random variables is counterfactually stable if: If we observe $X_I = i$, then $\forall j \neq i$, if $\frac{p_i'}{p_i} \geq \frac{p_j'}{p_j}$, implies that $P^{do(I') | X_{I}=i}(X=j) = 0$.
\end{definition}

Our proof is based on the insight that counterfactual stability is invariant to choice of prior so long as the gumbel samples are fixed across interventions. Our modified topdown sampling procedure ensures the same gumbel samples are used across interventions. Hence we preserve counterfactual stability even with regularization. For completeness, we include the contrapositive proof of~\citet{oberst2019counterfactual} here:

As denoted before, let $X^{(\ttT)}_{I} = i$ (we drop $\ttT$ from superscript for random variables when context is clear) be the outcome observed under intervention (behaviour policy) in the target domain. The state observation $i$ implies almost surely:
\begin{equation}\label{eq:cs_1}
    \log{p}_{i} + g^{(\ttT)}_{i} > \log{p}_{j} + g^{(\ttT)}_{j} \forall j \neq i
\end{equation}

where $p_i \coloneqq P^{(\ttT)}(X = i)$ is short hand for the state-transition probabilities in the target domain induced using the Mixture-prior described above. To prove counterfactual stability, the contrapositive is proved i.e. $\forall j \neq i, P^{do(I')|X_{I}=i}(X = j) \neq 0 \implies \frac{p_i'}{p_i} < \frac{p_j'}{p_j}$. 

To begin with, if $P^{do(I')|X^{(\ttT)}_{I}=i}(X^{(\ttT)} = j) \neq 0$ implies that there exist gumbel variables $g^{(\ttT)}_i$ and  $g^{(\ttT)}_j$ such that:
\begin{equation}\label{eq:cs_2}
    \log{p}_{i}' + g^{(\ttT)}_{i}  < \log{p}_{j}' + g^{(\ttT)}_{j} 
\end{equation}

where $p_j' \coloneqq P^{do(I')|X_{I}=i}(X^{(\ttT)} = j)$. Since gumbels sampled for Equation (\ref{eq:cs_1}) and (\ref{eq:cs_2}) are fixed, there must exist gumbels that satisfy both equations. The only difference is that an informative prior is imposed on these gumbels is different. Thus counterfactual stability is not violated due to the mixture prior and modified gumbel procedure. Combining the inequalities and re-arranging, we establish the contrapositive with regularization.

\section{Estimating counterfactual rewards with informative prior}\label{sec:app_rewards}
Our proof largely follows~\citet{oberst2019counterfactual} and~\citet{buesing2018woulda} although with a different posterior on the Gumbel exogenous variables. We make the difference explicit in the following:
$\mu^{(\ttT)}$ be the behavior policy in the target environment and the corresponding trajectories denoted by $\tau^{\mu^{(\ttT)}}$. Let $\pi^{(\ttT)}$ be a candidate policy for which expected rewards are to be estimated and $\tau^{\pi^{(\ttT)}}$ be the counterfactual trajectories using conditional posteriors $p(\bU^{(\ttT) | \tau})$ over exogenous variables $\bU^{(\ttT)}$. $\tau^{\pi^{(\ttT)}}$ is a deterministic function of $\bU^{(\ttT)}$. The prior distributions over $\bU$ are $p^{\pi}(\bU^{(\ttT)}) = p^{\mu}(\bU^{(\ttT)}) = p(\bU^{(\ttT)})$ (which remains the same as any informative prior coming from the source environment imposed in this framework). We drop the notation $(\ttT)$ in the following as we are only concerned about the target environment hereon. Source distributions, if any, will be made explicit. Expected reward is then given by:
\begin{align}
     E_{p^{\pi}}[\cR(\tau)] &= \int_{\bu} \cR(\tau(\bu)) p^{\pi}(\bu) d\bu \\
     &= \int_{\bu} \cR(\tau(\bu)) p^{\mu}(\bu) d\bu \\
     &= \int_{\bu} \cR(\tau(\bu)) \bigg( \int_{\tau} p^{\mu}(\tau, \bu) d\tau \bigg) d\bu \label{eq:int} \\
     &= \int_{\bu} \cR(\tau(\bu)) \bigg( \int_{\tau} p^{\mu}(\bu | \tau)p^{\mu}(\tau) d\tau \bigg) d\bu \\
      &=  \int_{\tau} \int_{\bu} \cR(\tau(\bu)) p^{\mu}(\bu | \tau)p^{\mu}(\tau) d\bu d\tau   \label{eq:int2}\\
      &= E_{\tau^{\pi} \sim p^{\mu}(\tau)} \bigg[ \int_{\bu} \cR(\tau(\bu)) p^{\mu}(\bu | \tau) d\bu \bigg] \\
      &= E_{\tau^{\pi} \sim p^{\mu}(\tau)} \bigg[ E_{\bu \sim p^{\mu}(\bu | \tau)} [ \cR(\tau(\bu)) ] \bigg]  \label{eq:int3}
\end{align}
Where note that Equation (\ref{eq:int}) integrates over \emph{observed} policies only. This allows to swap integrals in Equation (\ref{eq:int2}). The key difference is that in Equation (\ref{eq:int3}), for the subset of exogenous variables $\bg^{(\ttT)} \subseteq \bu^{(\ttT)}$, the posterior is inferred by incorporating the mixture prior that helps regularize from the source.

\subsection{Justification of Counterfactual Regularization}\label{app:regintuition}
We consider two subpopulation groups (diabetic) and (non-diabetic) and the corresponding transition dynamics $P_{d}(S|S,A)$ and $P_{nd}(S|S,A)$. We justify our counterfactual regularization using two cases i) where diabetes status of the patient is known in both source and target environment, ii) diabetes status is unknown in both source and target. We assume here that the statistical bias in the estimated transition estimates of diabetic patients $\hat{P}^{(\ttS)}_{d}(S|S,A)$ and $\hat{P}^{(\ttT)}_{d}(S|S,A)$ is higher in the source domain than in the target domain (by virtue of number of samples from this sub-population observed in both domains). The effect is the opposite for non-diabetics. i.e. the bias is lower in the source than the target domain. That is:
\begin{align}
    \begin{split}
    \| \hat{P}^{(\ttS)}_{d}(S|S,A) - &P_{d}(S|S,A)\| \\ \geq~\| &\hat{P}^{(\ttT)}_{d}(S|S,A) - P_{d}(S|S,A)\|    
    \end{split}
\end{align}

\begin{align}
    \begin{split}
    \| \hat{P}^{(\ttS)}_{nd}(S|S,A) - &P_{nd}(S|S,A)\| \\ \leq~\| &\hat{P}^{(\ttT)}_{nd}(S|S,A) - P_{nd}(S|S,A)\|
    \end{split}
\end{align}

Under this setting, consider a vanilla regularization in the target-domain for the transition statistics where we use a convex combination of source and transition estimates for each sub-group instead of using the target-domain estimates only (analogously for the non-diabetic subgroup): $\eta \hat{P}^{(\ttS)}_{d}(S|S,A). + (1-\eta) \hat{P}^{(\ttT)}_{d}(S|S,A)$ where $0\leq \eta \leq 1$.

Then the statistical bias for the non-diabetic group is given by:
\begin{align}
\begin{split}
    &\|P_{nd}(S|S,A) - \eta \hat{P}^{(\ttS)}_{nd}(S|S,A) + (1-\eta) \hat{P}^{(\ttT)}_{nd}(S|S,A)\| \\
    = &\|~\eta~(P_{nd}(S|S,A) - \hat{P}^{(\ttS)}_{nd}(S|S,A)) \\ 
    &\qquad + (1-\eta) (P_{nd}(S|S,A) - \hat{P}^{(\ttT)}_{nd}(S|S,A))\| \\ 
     \leq &~\eta~\| \hat{P}^{(\ttS)}_{nd}(S|S,A) - P_{d}(S|S,A)\| \\
    &\qquad + (1-\eta) \| \hat{P}^{(\ttT)}_{nd}(S|S,A) - P_{nd}(S|S,A)\| \\
     \leq &~\eta~\| \hat{P}^{(\ttT)}_{nd}(S|S,A) - P_{nd}(S|S,A)\| \\
     &\qquad + (1-\eta) \| \hat{P}^{(\ttT)}_{nd}(S|S,A) - P_{nd}(S|S,A)\| \\
      = &~\| \hat{P}^{(\ttT)}_{nd}(S|S,A) - P_{nd}(S|S,A)\|
\end{split}
\end{align}

The regularization from the source, done naively, will benefit the non-diabetic group. However this is not necessarily the case for the diabetic group (notice that the bias can demonstrated to be better than the source environment). However, since diabetics are the majority subpopulation in the target, such naive regularization is insufficient. Consider instead the exogenous variables corresponding to the transition dynamics model, specifically the Gumbel variables. The Gumbel variables in the source and the target are essentially parameterized by the $\log{\hat{P}^{(\ttS)}_{d}(S|S,A)}$ and $\log{\hat{P}^{(\ttT)}_{d}(S|S,A)}$ respectively (similarly for the non-diabetic population when the status is known). Intuitively we are essentially replacing the deterministic regularization above with a stochastic one where the so that the sampled Gumbels can still be utilized under the \emph{true} dynamics of the target domain to generate counterfactual trajectories. Thus, our Mixture-top-down sampling can be considered as a variational/stochastic procedure to the naive regularization procedure. Notably, the stochastic procedure \emph{decouples} the transition dynamics regularization into two steps, i) sampling Gumbels with potentially biased transition estimates, and ii) augmenting trajectories according to the true target dynamics that improves statistical estimation of the dynamics in the target. 

These same insights hold true when diabetes status is not known i.e. in the presence of unobserved confounding, except that a cumulative transition statistic is available instead of separate estimates for each sub-population.

\section{KL-aggregation for CF-PI}\label{sec:app_kl_aggregation}

For discrete action space, KL-aggregation for regularization over policy is equivalent to log-aggregation~\cite{heskes1998selecting}. The proof here is provided for completeness. Consider the following aggregation setup over two discrete distributions:
\begin{align}
    \begin{split}
        \pi &= \argmin_{\pi} \lambda \infdiv{\pi}{\nu} + (1-\lambda) \infdiv{\pi}{\pi^{(S)}}  
    \end{split}
\end{align}

This can be posed as a parametric minimization over the vector $\pi \in \Delta^{K-1}$ (where $K$ is the dimensionality of the action space) as follows:
\begin{align}\label{eq:app_kl_agg}
    \begin{split}
        & \argmin_{\pi} \lambda \langle \pi^{T} , \log{\pi} - \log{\nu} \rangle +  \langle \pi^{T} , \log{\pi} - \log{\pi^{\ttS}} \rangle \\
        & \, \text{s. t. } \pi \in \Delta^{K-1} 
    \end{split}
\end{align}

Equation~\ref{eq:app_kl_agg} is convex in $\pi$ with a convex (simplex) constraint. Simply writing out the Lagrangian, provides:
\begin{align}\label{eq:app_kl_agg2}
    \begin{split}
        \argmin_{\pi} ~\lambda~&\langle \pi^{T} , \log{\pi} - \log{\nu} \rangle +  \langle \pi^{T} , \log{\pi} - \log{\pi^{\ttS}} \rangle \\ &\qquad\qquad+~\mu~(\sum_{k=1}^K \pi_k - 1) + \beta \pi \\
        & \, \text{where } \beta \geq 0
    \end{split}
\end{align}

Taking the gradient and setting to $0$ yields:
\begin{equation}
    (1+\log{\pi}) + \mu \mathbf{1} + \beta = \lambda \log{\nu} + (1-\lambda) \log{\pi^{\ttS}} 
    \label{eq:app_kl_agg3}
\end{equation}
If $1 + \mu \mathbf{1} + \beta = 0$, then $\log{\pi} =  \lambda \log{\nu} + (1-\lambda) \log{\pi^{\ttS}}$ and the simplex constraint is satisfied.

\section{CFPT Procedure}\label{sec:app_CFPT}
\begin{algorithm*}[ht] \small
    \caption{Counterfactually Guided Policy Transfer}
    \label{alg:CFPT}
    \begin{algorithmic}[1] 
        {\small
        
        \STATE {\color{alggreen} // Counterfactual inference (CFI) with source environment prior}
        \STATE \textbf{\textsc{CFI}}(data $\hat x_o$, SCM $\model$, intervention $I$, query $X_q$, prior $X^{(\texttt{S})}_P$)
            \STATE $\hat u \sim p(u|\hat x_o)$ \COMMENT{Sample noise variables from posterior over latent parameters} 
            \STATE $p(u)\leftarrow \delta (u-\hat u)$ \COMMENT{Replace noise distribution in $p$ with $\hat u$}
            \STATE $f_i\leftarrow f_i^I$ \COMMENT{Perform intervention $I$}
            \STATE \textbf{return} $x_q\sim p^{\doo(I)}(x_q\vert \hat u)$ \COMMENT{Simulate from the counterfactual posterior over model $\model^I_{\hat x_o}$, Alg.~\ref{alg:updated_topdown}}
        \newline
        
        \STATE {\color{algblue}// Regularized Policy Iteration (RegPI) }
        \STATE \textbf{\textsc{RegPI}}(current policy $\pi^{(\texttt{T})}$, discount $\gamma$, aug. statistics $\tilde P^{(\texttt{T})}$, source policy $\pi^{(\texttt{S})}$, reg. param $\lambda$)
        \STATE Initialize $V(s)$ for all $s\in \cS$
            \REPEAT
            \REPEAT
                \FOR{each $s\in\cS$}
                    \STATE $v\leftarrow V(s)$
                    \STATE $V(s) \leftarrow \sum_{s'}\tilde P^{(\texttt{T})}(s'|s,\pi^{(\texttt{T})}(s))\left[\mathcal{R}(s,\pi^{(\texttt{T})}(s)) + \gamma V(s')\right]$
                \ENDFOR
            \UNTIL{convergence}
                \FOR{each $s\in\cS$}
                    \STATE $\nu(\cdot|s) \leftarrow \frac{1}{Z_p}\left(\mathbf{\mathcal{R}}(s,\cdot) + \gamma \left(\tilde{P}^{(\texttt{T})}(s'|s,\cdot)\otimes \mathbf{V}(s')\right)\right)$ \COMMENT{Gen. a proposal dist. over actions}
                    \STATE $\pi^{(\texttt{T})}(s)\leftarrow \argmax_a \exp\left\{\lambda\log\nu(a|s)+(1-\lambda)\log\pi^{(\texttt{S})}(a|s)\right\}$ \COMMENT{KL minimization, Eq.~\ref{eq:app_kl_agg3}}
                \ENDFOR
            \UNTIL{ $\pi^{(\texttt{T})}$ converges or after MAX_ITERATIONS}
        \newline
        \STATE {\color{darkgray}// Counterfactual Policy Iteration (CF-PI)}  
        \STATE \textbf{\textsc{CF-PI}}(SCM $\model$, init. policy $\pi^{(\texttt{T})}_0$, source policy $\pi^{(\texttt{S})}$, source statistics $P^{(\texttt{S})}$, num. iters $K$, num. traj samples $N$, mixture param $\eta$)
            \FOR{$k=1,\ldots, K$}        
                \STATE \hspace{2em} {\color{darkgray}// Gather a batch of counterfactually generated trajectories in the target environment}
                \STATE $\{h^i\}_{i=1}^N\sim\mathcal H^{(\texttt{T})} \subset \mathcal T$ \COMMENT{Sample batch of trajectories from observed data}
                \STATE $\{\tau^i\}_{i=1}^N=\mathrm{CFI}(\{h^i\}_{i=1}^N, \model, I(\mu\rightarrow \pi^{(\texttt{T})}_{k-1}), \mathcal T, P^{(\texttt{S})})$ \COMMENT{{\color{alggreen}Counterfactual rollouts under $\pi^{(\texttt{T})}_{k-1}$}}
                
                \STATE \hspace{1em} {\color{orange}// Estimate empirical transition statistics $\hat P^{(\texttt{T})}\text{ from }\{\tau^i\}_{i=1}^N$}
                \STATE $\tilde P^{(\texttt{T})} = \frac{1}{Z_{\mathrm{T}}}\left(\eta \ P^{(\texttt{T})} + (1-\eta) \ \hat P^{(\texttt{T})}\right)$ \COMMENT{Augment observed environment transition statistics}
                \STATE \hspace{1em} {\color{algblue}// Regularized policy iteration with counterfactually augmented target env. transition statistics}
                \STATE $\pi^{(\texttt{T})}_{k}\leftarrow \mathrm{RegPI}(\pi^{(\texttt{T})}_{k-1}, \gamma, \tilde P^{(\texttt{T})}, \pi^{(\texttt{S})}, \lambda)$
            \ENDFOR
        }
    \end{algorithmic}
\end{algorithm*}
Here we present the psuedocode (Algorithm~\ref{alg:CFPT}) outlining our proposed Counterfactually Guided Policy Transfer (CFPT) approach as discussed in this section. CFPT is enabled by first having access to an optimal treatment policy $\pi^{(\ttS)}$ developed within a data-rich source environment $\ttS$ as well as an estimation of the transition statistics $P^{(\ttS)}$ collected from observed data. These methods combine to form a two-phase counterfactual regularization approach for policy learning in a data-scarce target environment $\ttT$.

Policy learning is done through a counterfactually regularized form of PI (CF-PI). The heart of CF-PI rests on the discussion provided in Section~\ref{sec:reg_pi} which introduces how we regularize PI (RegPI) in the target environment through KL-divergence log aggregation. CF-PI is executed as follows. For $K$ iterations, a batch of trajectories $\{h^i\}_{i=1}^N$ observed within the target environment are sampled (Alg.~\ref{alg:CFPT}, line 24). This batch is used, along with the current policy within $\ttT$, $\pi_k^{(\ttT)}$, and the prior over the transition statistics from the source environment $P^{(\ttS)}$ to generate counterfactual trajectories $\{\tau^i\}_{i=1}^N$ (Alg.~\ref{alg:CFPT}, line 25 $\rightarrow\mathrm{CFI}$ lines 1-6). This counterfactual sampling procedure, leveraging the property of counterfactual stability within Gumbel-Max SCMs, is described in Sections~\ref{sec:aug_gumbel}. 
The batch of trajectories produced may exhibit some diversity in observed transition statistics from those observed in $\ttT$. To account for this, an augmented transition matrix $\tilde{P}^{(\ttT)}$ is formed through a weighted sum between $P^{(\ttT)}$ and the empirically observed set from $\{\tau^i\}_{i=1}^N$ ($\hat{P}^{(\ttT)}$, line 26). This augmented transition matrix is then passed to RegPI as discussed in Section~\ref{sec:reg_pi} (line 27 $\rightarrow\mathrm{RegPI}$, lines 7-21). 

RegPI alternates between policy evaluation and policy improvement steps. In policy evaluation (lines 11-15) where the current policy $\pi_k^{(\ttT)}$ is used to refine an estimate of the underlying value function based on the observed rewards and estimated transition statistics when applying $\pi^{(\ttT)}_k$. Once this value estimate converges, it is used in a form of a Bellman update (line 17) to generate a proposal distribution over actions for each state. This is the beginning of the policy improvement step (lines 16-19). After the proposal distribution $\nu(\cdot|s)$ is generated, it is used to estimate the best policy while being constrained by the source policy $\pi^{(\ttS)}$ through KL-divergence log-aggregation (line 18). This improved policy is then sent back to the evaluation step to refine the estimate of the value function and this process continues until $\pi_k^{(\ttT)}$ converges or a maximum number of iterations has been performed. With this updated policy, a new batch of trajectories are sampled from $\mathcal{H}^{(\ttT)}$ to draw new counterfactual samples and next iteration continues to further optimize the target policy $\pi^{(T)}$.

\section{Additional Experimental Details and Results}\label{sec:app_experiments}

This section contains information about specific settings used to learn our policies using the various baseline approaches as well as the ablations and full CFPT procedure. We also present additional experimental findings in support of those presented in the main body of the paper.

\subsection{Baseline Policy Learning Settings}

As mentioned, we use the coarse sepsis simulator introduced by \citet{oberst2019counterfactual} which can be found at \url{https://www.github.com/clinicalml/gumbel-max-scm}. We make one major deviation from their setting of the simulator in that we do not mask out the observations of a patient's glucose level. We also adjust the initialized proportion of diabetic patients included in the population used to define an experimental environment.

For all experiments and baselines, we fix the discount rate $\gamma$ to $0.99$ and the maximum number of iterations for each use of policy iteration to $1000$. The number of trajectories in the source environment $\ttS$ was fixed to $10,000$ and the proportion of diabetic patients in $\ttS$ was set to $0.1$. All target environments $\ttT$, independent of the size of the diabetic subpopulation, were represented with $2000$ trajectories. Recall that any indication of whether a patient has diabetes or not is unobserved.

In the following subsections, we report any additional parameter settings or adjustments to the learning procedure. All policy learning is done via Policy Iteration (augmented as described in the paper) utilizing an adjusted version of the \texttt{pymdptoolbox} library. Code to replicate our experiments will be made available upon publication of our paper.

\subsubsection{Baselines}
\paragraph{\sc Random}
This baseline doesn't explicitly learn a policy. For evaluation, all action selection is done by uniformly sampling between the 8 possible actions. 

\paragraph{\sc Scratch}
This non-transfer baseline constructs an empirical transition matrix from the observed data $\mathcal{H}^{(\ttT)}$ which is then used within policy iteration to produce the policy $\pi^{(\ttT)}$.

\paragraph{\sc Pooled}
To pool the data between the environments $\ttS$ and $\ttT$ we estimate the transition statistics using both $\mathcal{H}^{(\ttS)}$ and $\mathcal{H}^{(\ttT)}$ which is then used to learn a policy with Policy Iteration in the target environment $\ttT$.

\paragraph{\sc Blind}
This naive transfer baseline does not learn a new policy, rather it blindly uses the policy $\pi^{(\ttS)}$ from the source environment without any adaptation or fine tuning. In evaluation within the target environment, actions are selected according to the distribution put forward by the source policy.

\subsubsection{Counterfactually Guided Policy Transfer (CFPT)}
\paragraph{\sc CFPT}

When applying CFPT for learning a policy in the target environment we needed to tune several hyperparameters to set-up the best policy learning environment within a data-scarce target environment $\ttT$ when transferring from a fixed source environment (proportion of diabetic patients: 0.1). This involved determining the best value for the number of iterations $K$ of CF-PI, the mixture weight for regularizing the counterfactual sampling $w^{(\ttT)}$, the weighting for augmenting the observed transition statistics $\eta$, and perhaps most importantly the weight for regularizing the policy learning with $\lambda$. As $w^{(T)},\ \eta\text{ and }\lambda$ correspond to linear combinations between two quantities, we tested each of these hyperparameters between 0 and 1 in increments of 0.1, using the learned policy's true RL performance in the target environment to compare between settings. We report the optimal settings for learning within $\ttT$ in each target environment (diabetic proportion of population ranging from 0 to 1 in 0.1 increments) in Table~\ref{tab:app_parameters}. For all target environments the number of iterations $K$ for CF-PI was $50$.

\begin{table*}
  \caption{Best performing hyperparameter settings for CFPT across each target environment $\ttT$}
  \label{tab:app_parameters}
  \centering
  \begin{tabular}{c|c|c|c|c|c|c|c|c|c|c|c|c|c}
    \toprule
    Diabetic Proportion & 0.0 & 0.1 & 0.2 & 0.3 & 0.4 & 0.5 & 0.6 & 0.7 & 0.8 & 0.9 & 1.0 \\    
    \midrule
    $w^{(\ttT)}$ & 0.8 & 0.8 & 0.8 & 0.6 & 0.7 & 0.8 & 0.8 & 0.6 & 0.8 & 0.7 & 0.8 \\ 
    $\eta$ & 0.7 & 0.8 & 0.7 & 0.7 & 0.8 & 0.7 & 0.6 & 0.8 & 0.7 & 0.7 & 0.7 \\ $\lambda$ & 0.9 & 0.9 & 0.3 & 0.1 & 0.3 & 0.6 & 0.3 & 0.1 & 0.3 & 0.4 & 0.9 \\
    \bottomrule
  \end{tabular}
\end{table*}

\subsubsection{Ablations}
\paragraph{\sc Reduced CFPT}
In this ablation of CFPT, we removed the informative prior over the transition statistics within counterfactual sampling. This effectively removes this form of regularization that makes up CFPT. All other procedures and operations within CFPT were run as normal with the same parameter settings as shown in Table~\ref{tab:app_parameters} performing best.

\paragraph{\sc Regularized Policy Iteration (RegPI)}

In this ablation, we removed the sampling of counterfactual trajectories completely from CFPT. We also removed any batch sampling from $\mathcal{H}^{(T)}$, using instead the full set of observed data within $\ttT$. A single run of RegPI was executed, using the top performing values for $\lambda$ as reported in Table~\ref{tab:app_parameters}.

\subsection{Additional Results}
\label{sec:app_addtl_results}

In this section we present additional results that we did not have space to include in the main paper as well as an important additional analysis over the separate subpopulations (diabetic vs. non-diabetic) among the patients observed in the target clinical environment. In Table~\ref{tab:app_fig2} we present the numerical values for the comparison between CFPT and all baselines and ablations shown in Figure~\ref{fig:comp_rl_results}.

\begin{table}[ht!]
\centering
\begin{tabular}{ccc}
\toprule
\bf{Approach}   & \bf{True RL Reward}      & \bf{WIS Reward}        \\ \hline
Scratch  & $-0.7398 \pm 0.007$ & $0.6388 \pm 0.584$  \\
Pooled   & $-0.4808 \pm 0.012$ & $0.9782 \pm 0.004$  \\
Blind    & $-0.3915 \pm 0.013$ & $0.5874 \pm 0.113$ \\ \hline
RegPI    & $-0.3366 \pm 0.012$ & $0.6266 \pm 0.057$  \\
Red. CFPT & $-0.2116 \pm 0.010$ & $0.7689 \pm 0.077$  \\
CFPT      & $-0.1491 \pm 0.011$ & $0.7333 \pm 0.004$ \\ 
\bottomrule
\end{tabular}
\vspace{0.15cm}
\caption{\small Numerical values corresponding the policy performance results presented in Figure~\ref{fig:comp_rl_results}. The observed behavior policy $\mu^{(\ttT)}$ receives an average reward of $0.1486 \pm 0.018$.}
\label{tab:app_fig2}
\end{table}

\subsubsection{Analysis of selecting $\eta$, affecting the augmentation of $P^{(\ttT)}$}
\label{sec:app_eta_analysis} In Figure~\ref{fig:app_eta_sweep} we demonstrate the range of policy performance under CFPT with CF-PI when varying the parameter $\eta$. Recall from Section~\ref{sec:full_method} that $\eta$ is used to weight the augmentation of the observed transition statistics in the target environment ($P^{(\ttT)}$) with those estimated from the counterfactually inferred trajectories ($\hat{P}^{(\ttT)}$). In this figure we demonstrate CFPT performance for policies learned in the simulated environment with a proportion of diabetic patients being 0.8, transferring from a source environment where the diabetic proportion is 0.1. The number of iterations $K$ of CF-PI is set to 50 and we demonstrate the effect of the policy regularization parameter $\lambda$ and the parameter $\eta$ which is used to incorporate the inferred empirical transition matrix $\hat{P}^{(\ttT)}$ into the observed target transition matrix $P^{(\ttT)}$ for use in regularized policy iteration (Algorithm~\ref{alg:CFPT} line 26).

\begin{figure}[htbp!]
    \centering
    \includegraphics[width=\linewidth]{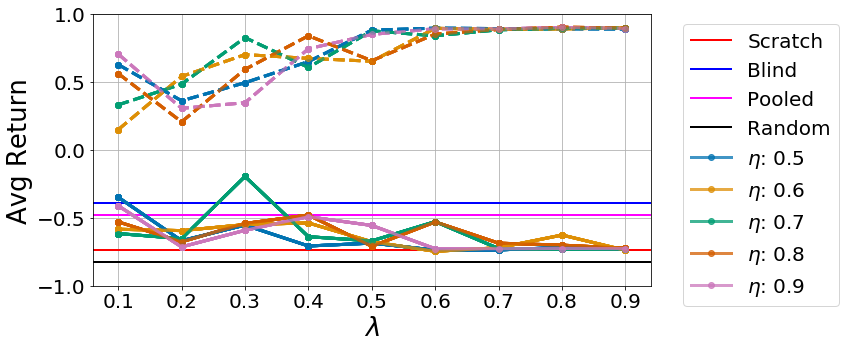}
    \caption{Demonstration of parametric study used to identify optimal settings of CFPT parameters. Shown here, within a target environment with a diabetic proportion set to 0.8 with a source population diabetic proportion set to 0.1, we see that the True RL performance (solid lines) varies as $\lambda$ and $\eta$ interact with a diminshed effect as $\lambda$ increases. CF-PE estimated reward (dotted lines) asymptotically overestimates policy performance as $\lambda$ increases.}
    \label{fig:app_eta_sweep}
\end{figure}

What we see in Figure~\ref{fig:app_eta_sweep} is that there is a balance when selecting $\eta$ and $\lambda$ for CFPT policy learning. As $\lambda$ increases, meaning we are using less of the source environment, no matter the choice of $\eta$, performance more or less converges to the baseline non-transfer setting within $\ttT$. However when $\lambda$ is smaller, meaning we intend to use a larger proportion of the source policy, we see that the choice of $\eta$ can have a broad effect. In the scenario demonstrated in Figure~\ref{fig:app_eta_sweep}, we see that the optimal setting comes when $\eta=0.7$ and $\lambda=0.3$ which are the values used for all CFPT variants and ablations presented in Sec~\ref{sec:experiments} when the proportion of diabetic patients in $\ttT$ is 0.8.

\subsubsection{Counterfactual sampling with fully observed state}
\label{sec:app_counterfactual_samples}

Similar to the analysis presented in Section~\ref{sec:counterfactual_samples} and in Figure~\ref{fig:maskedDiab}, we investigate the change in the feature distributions in $\ttT$ when the simulated patient's diabetic status is known after sampling counterfactual trajectories using the Gumbel-Max SCM, regularized by $\ttS$. The resulting comparison is shown in Figure~\ref{fig:fullObs}.

\begin{figure}[h!]
    \centering
    \includegraphics[width=0.9\linewidth]{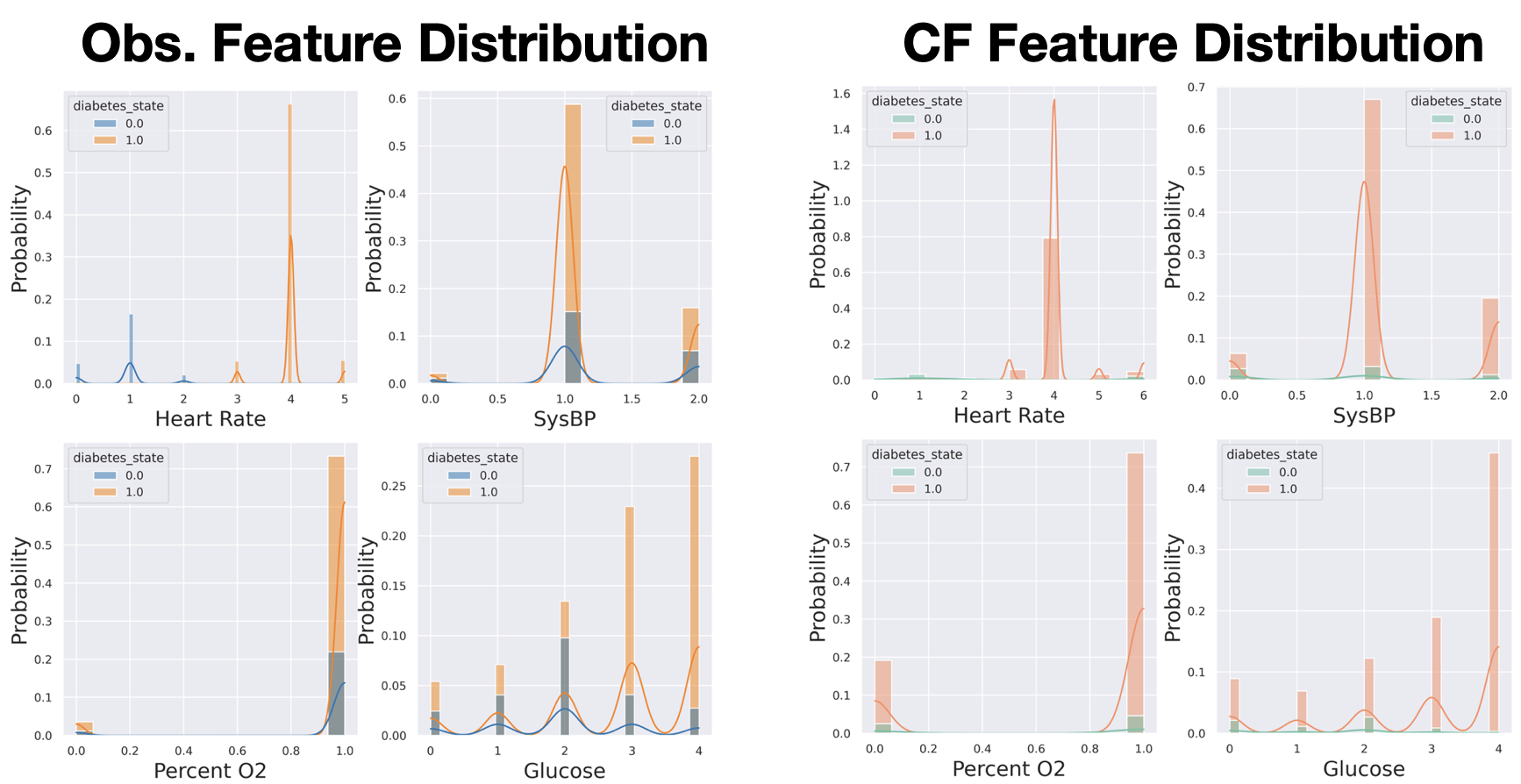}
    \caption{Feature distributions with full observations with the patient observations obtained in $\ttT$ on the left and a resampling of the feature distributions using counterfactuals drawn from the regularized Gumbel-Max SCM on the right. }
    \label{fig:fullObs}
\end{figure}

\subsubsection{Off-policy Evaluation of $\pi^{(\ttT)}$}

The off-policy evaluation (OPE) of policies learned from fixed data, without the ability to independently test them is a challenging part of offline RL, and has been understudied in partially observed settings~{\small\citep{tennenholtz2020off}}. Importance Sampling (IS) can provide an estimate of policy performance with low bias for OPE~{\small\citep{thomas2015safe}}, which is desirable in a transfer setting. While we focus on true rewards as our primary evaluation in this paper, we provide OPE estimates in this section for completeness. For this, we use weighted importance sampling (\ope)~{\small\citep{mahmood2014weighted}} to evaluate our transfer policies due to its interesting consistency properties. In Sec.~\ref{sec:app_policy_introspection} we use an alternative, counterfacutally determined OPE method, CF-PE~{\small\citep{oberst2019counterfactual}}, to qualitatively evaluate the learned policies. 

OPE estimates generally exhibit significant overconfidence in expected rewards, as areas of high reward are erroneously extrapolated over unseen regions of the state space. We report the results of evaluating \ope~for the learned policies $\pi^{(\ttT)}$ in Table~\ref{tab:app_fig2} for the setting of $\ttT$ with a 0.8 proportion of diabetic patient trajectories. As expected, WIS overestimates the true RL return in $\ttT$, even with poor policies (i.e. {\bf{\small{\textsc{Scratch}}}}). However, we see some semblance of improvement with each component used to implement our proposed CFPT approach. However, the general unreliability of these OPE estimates make it difficult to truly evaluate the benefits of transfer with counterfactual regularization.

Fortunately, CF-PE allows for the comparison of individual counterfactual trajectories influenced by CFPT and other methods. This form of introspective evaluation can help identify glaring safety issues for deployment of a trained policy in a new environment. As seen in Sec.~\ref{sec:app_policy_introspection}, CFPT acts more conservatively and closely approximates the observed behavior, leading to more stable performance.

\begin{figure}[htbp!]
    \centering
    \includegraphics[width=0.5\textwidth]{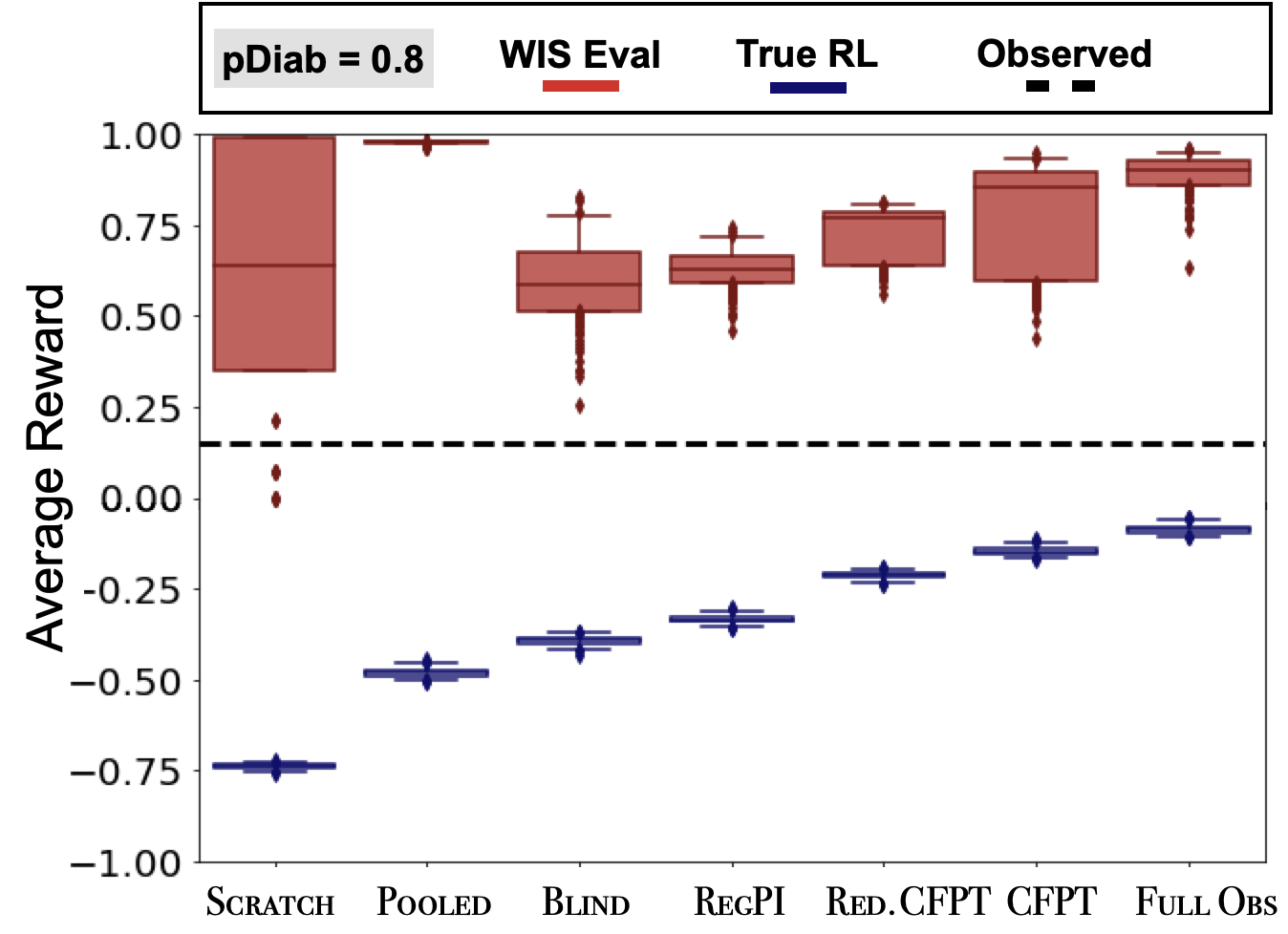}
    \caption{\small Comparison of estimated reward in $\ttT$ between CFPT and the baselines outlined in Sec.~\ref{sec:baselines}. Results after {\small \ope} are plotted in red where the true performance in $\ttT$ is plotted in blue. 95\% uncertainty intervals are found through 100 bootstrapped samples of the 5000 generated trajectories under the learned target policy.}
    \label{fig:app_rl_perf_ope}
\end{figure}

\subsection{Qualitative Analysis of $\pi^{(\ttT)}$}
\label{sec:app_qualitative_analysis}

\paragraph{Treatment Selection under CFPT:} To better compare policy evaluations between baselines, we perform an introspective analysis using CF-PE on both a policy and trajectory level. First, we compare the counterfactual outcomes between the naive baseline policy without transfer (\textsc{Scratch}) against our full CFPT trained policy, to identify how CFPT improves policy learning within $\ttT$ (other comparisons between CFPT and the baselines are in Section~\ref{sec:app_policy_introspection}). We first compare the counterfactual outcomes as estimated through CF-PE and then compare policy behavior under counterfactual evaluation for an individual patient drawn from $\ttT$. In Section~\ref{sec:app_cfpe} we present the aggregate counterfactual outcomes as suggested by CF-PE in comparison to what was observed. The primary difference in the evaluation between the \textsc{Scratch} policy and that learned through CFPT is in the percentage of patients CFPT does not discharge while {\sc Scratch} does. To further identify what separates these two policies we select patients who die under the behavior policy but are inferred to be discharged under {\sc Scratch} but kept in the hospital under CFPT. In Figure~\ref{fig:comp_cf_outcomes}, we observe that the non-transfer baseline ({\sc Scratch}) is far more aggressive in it's treatment decisions, leading to premature treatment cessation as the patient's condition deteriorates (visualized by the blue counterfactual trajectories) immediately after they are indicated for discharge. In contrast, the CFPT policy chooses a strategy that stably maintains the patient condition, continuing all treatments until the observation window terminates. 

\begin{figure*}[!ht]
    \centering
    \includegraphics[width=0.75\textwidth]{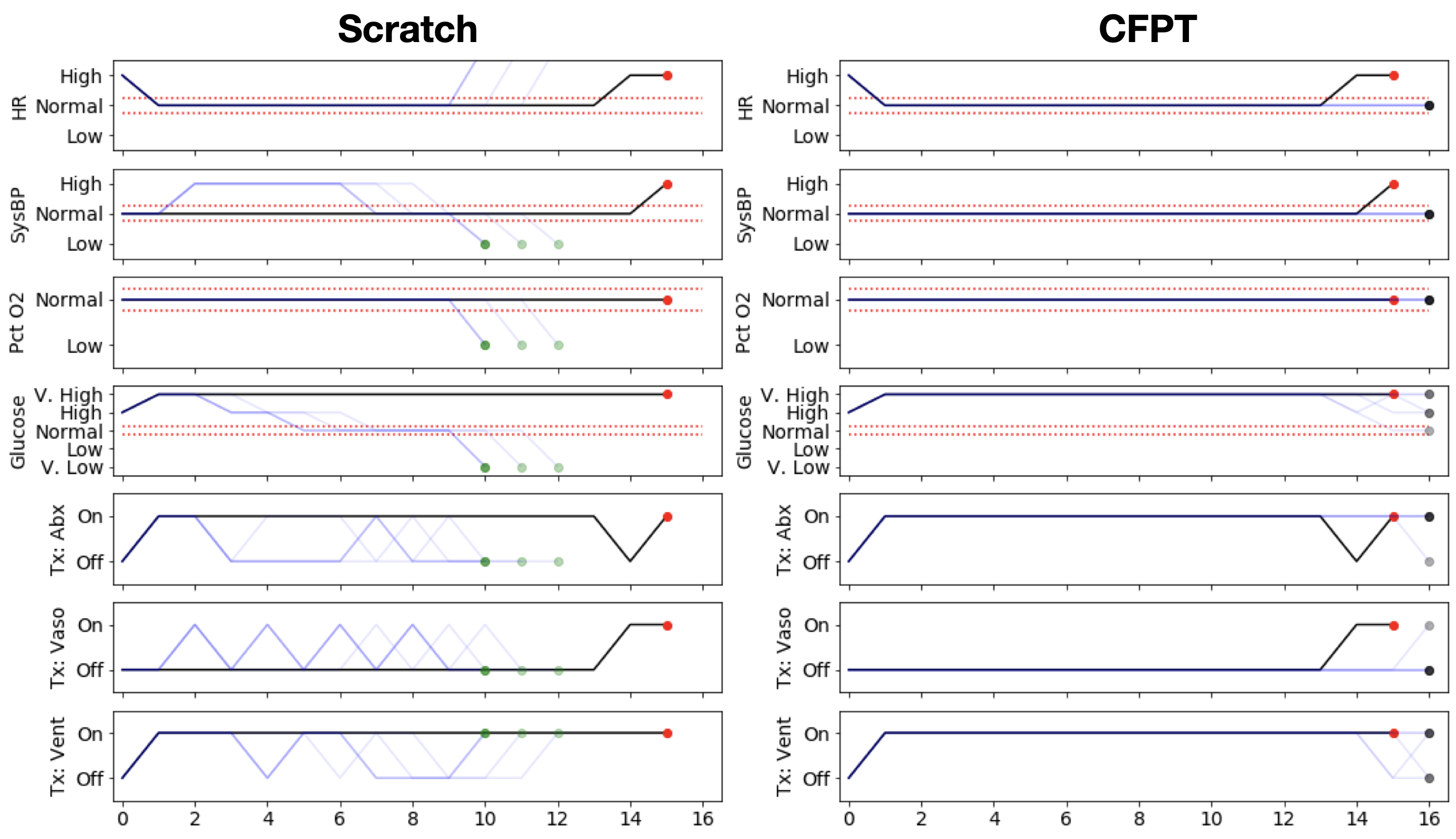}
    \caption{Qualitative comparison between CFPT and the {\small \sc Scratch} baseline. We compare an individual patient's counterfactual trajectories using these policies. Dark lines are the observed vital measurements and actions over time while the lighter blue traces correspond to counterfactual observations and actions. Green, red and black markers denote discharge, death and no change respectively. CFPT provides more stable treatment selection in comparison with the non-transfer baseline. Additional samples in Appendix~\ref{sec:app_policy_introspection}}.
    \vspace{-0.35cm}
    \label{fig:comp_cf_outcomes}
\end{figure*}

\subsubsection{Sub-population Analysis of Evaluated Policies}
\label{sec:app_subpopulation_analysis}

In Figure~\ref{fig:app_diab_subpop} we demonstrate the differences among subpopulations when learning a policy with CFPT for different target environments $\ttT$ (we choose to present here the subpopulations from environments with a proportion of diabetic (pDiab) patients being 0.3, 0.5 and 0.8). When pDiab = 0.5, the performance of CFPT is only marginally better than the compared baselines. It's evaluated policy performance with CF-PE is also on par with the non-transfer baseline ({\sc Scratch}) which is also mirrored in the aggregate counterfactual outcomes shown here as it is comparable to what has been observed when evaluating the {\sc Scratch} baseline previously. The comparison between the two highest performing instances of CFPT (pDiab = 0.2 and pDiab = 0.8) is an interesting cross-section view of what happens when the target environment differs from the source environment. Recall that the source environment for all instances of transfer was set to pDiab = 0.1. The population of this source environment is distributionally similar to $\ttT$ when pDiab=0.2. Here, we see a significant increase in the number of patients who are neither discharged or die in counterfactual evaluation, in comparison to the other two pDiab settings in Figure~\ref{fig:app_diab_subpop}. This provides some further evidence toward our conclusion that CFPT aids in the development of more circumspect policies. 

\begin{figure*}[htbp!]
    \centering
    \includegraphics[width=\textwidth]{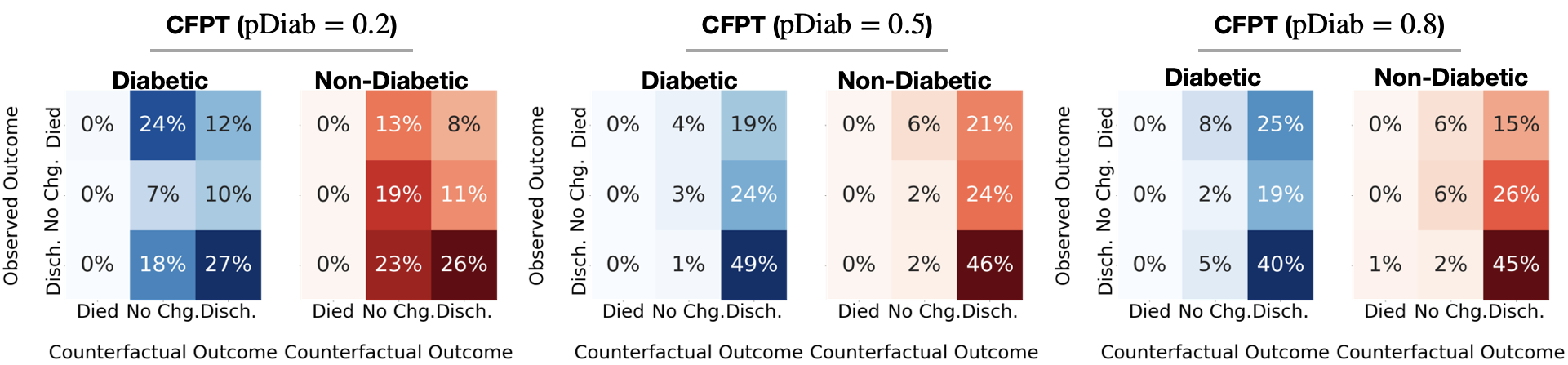}
    \caption{Aggregated counterfactual outcomes by subpopulation for different target environments $\ttT$.}
    \label{fig:app_diab_subpop}
\end{figure*}

In Figure~\ref{fig:app_eta_subpop} we demonstrate the differences among subpopulations when learning a policy with CFPT having different settings of $\eta$ (see Section~\ref{sec:app_eta_analysis}). With a properly chosen $\eta$ (here, 0.7), we see that the evaluated outcomes of the policy increasingly push toward discharge while less optimal policies (as evaluated) appear to not have identified appropriate treatment strategies to move a majority of the observed patient trajectories toward discharge. This is most apparent when considering the non-diabetic patients, those who are in the minority within the target environment. This divergence in performance between subpopulations speaks to the importance of properly tuning the CFPT procedure.

\begin{figure*}[htbp!]
    \centering
    \includegraphics[width=\textwidth]{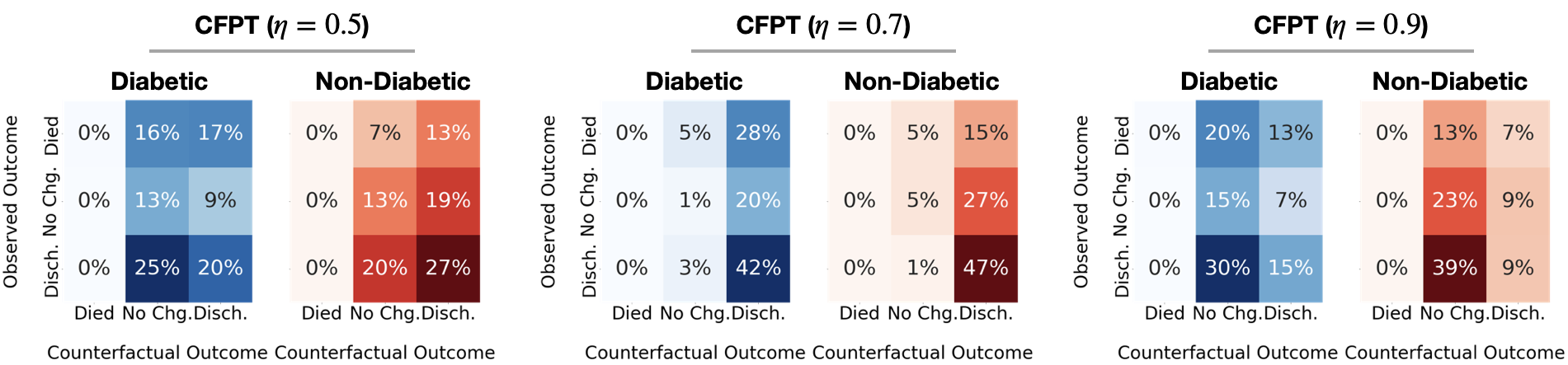}
    \caption{Aggregated counterfactual outcomes by subpopulation for different settings of $\eta$ within CFPT.}
    \label{fig:app_eta_subpop}
\end{figure*}

Figure~\ref{fig:app_subpop_eval} presents an analysis between subpopulations for the non-transfer baseline ({\sc Scratch}) and our proposed CFPT approach. Here we're looking at outcomes as inferred by counterfactual policy evaluation for the policies learned for each approach. As was discussed in Section~\ref{sec:results}, the policies learned via CFPT are slightly more conservative for the rarely observed non-diabetic population of the target environment. The suggested treatments and the inferred outcomes are far more measured in aggregate when using CFPT than is manifest from the non-transfer baseline.

\begin{figure}[htbp!]
    \centering
    \includegraphics[width=0.7\linewidth]{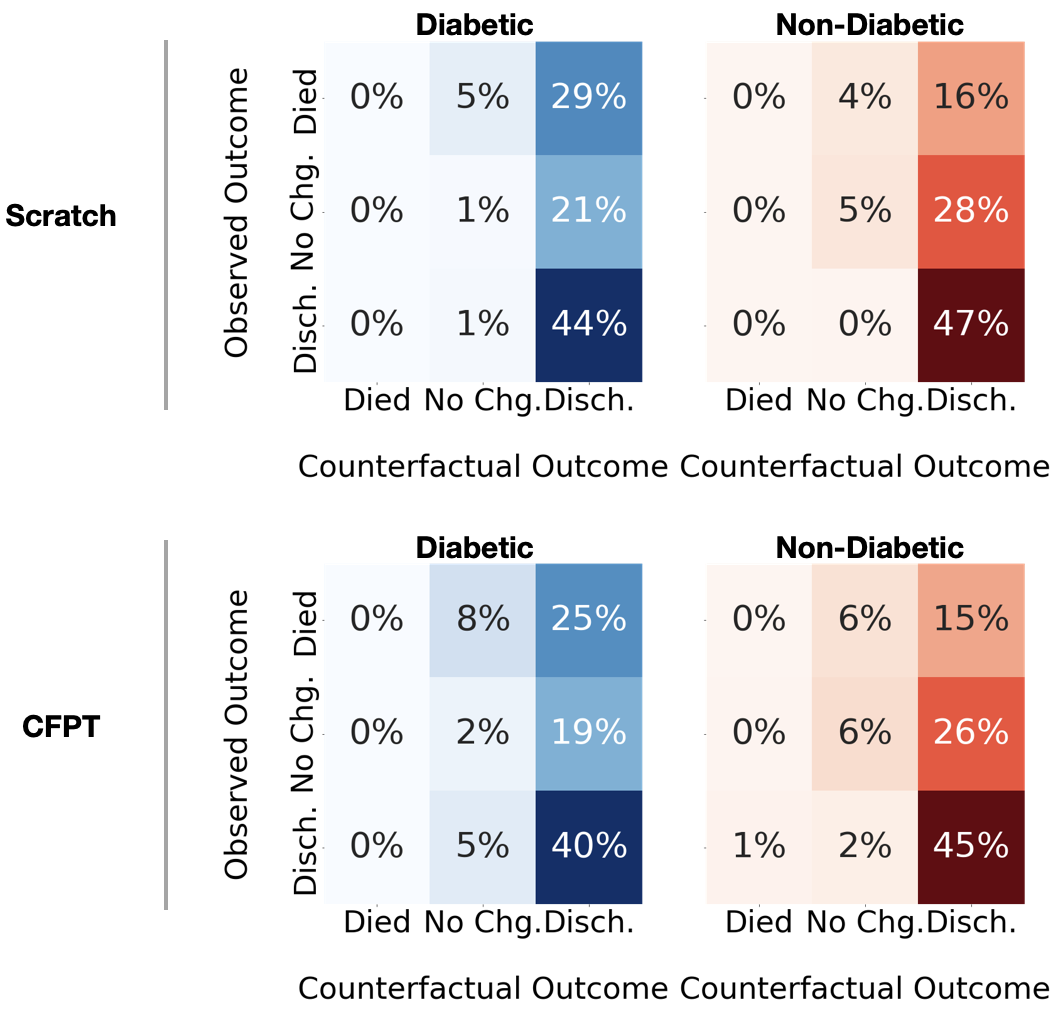}
    \caption{Aggregated counterfactual outcomes by subpopulation following the non-transfer baseline policy vs CFPT. These values are normalized by the number of patients belonging to each subpopulation (diabetic vs. non-diabetic) respectively. CFPT in aggregate is more conservative for the diabetic (rare class in source) in CF-PE evaluation.}
    \label{fig:app_subpop_eval}
\end{figure}

\subsubsection{Counterfactual policy evaluation: full comparison}
\label{sec:app_cfpe}

In Figure~\ref{fig:app_cfpe} we present a full comparison between the counterfactual policy evaluation results, segmented by outcome, for each baseline and version of our proposed CFPT approach for off-policy transfer learning with limited data in the target environment. The counterfactual outcome demonstrates the unreliability of a blind transfer policy. Benefits of each parts of our regularization do shift the confidence in our policy toward discharge. 

\begin{figure*}[htbp!]
    \centering
    \includegraphics[width=0.75\textwidth]{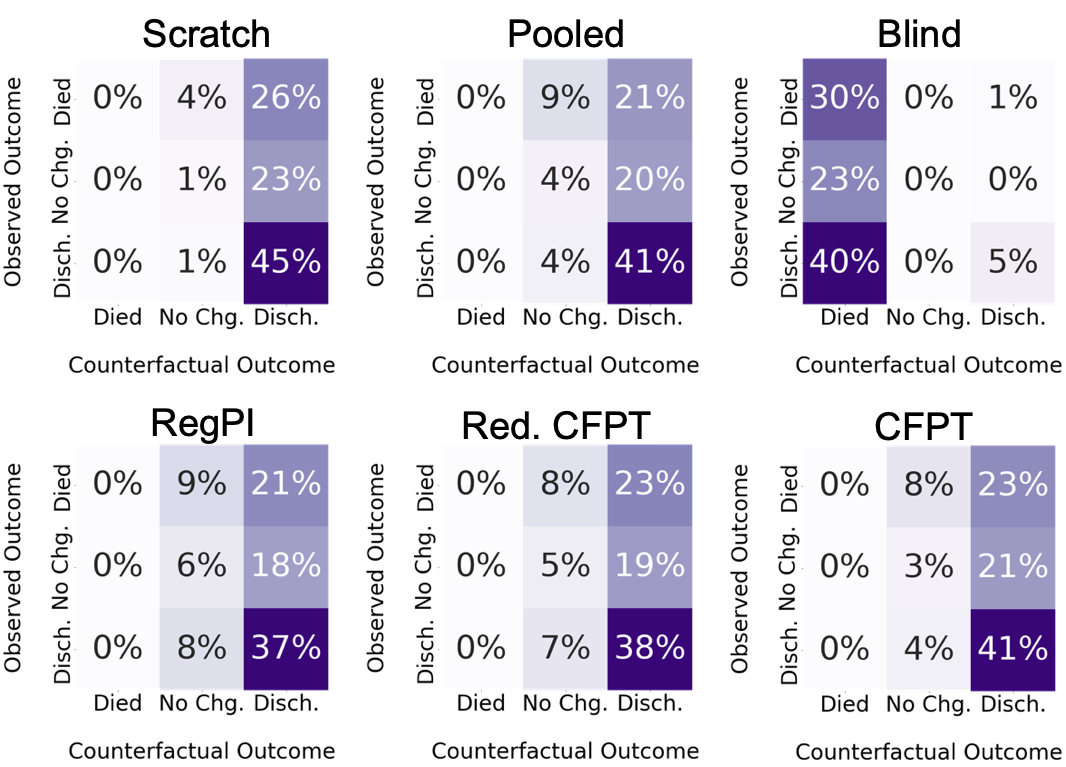}
    \caption{Comparison of all baselines in their aggregate population statistics in counterfactual evaluation of the policies learned in the target environment pDiab=0.8}
    \label{fig:app_cfpe}
\end{figure*}

\subsubsection{Introspective Analyses of Learned Policies}
\label{sec:app_policy_introspection}
In this section we include additional introspective trajectory comparisons between the the non-transfer baseline ({\sc{Scratch}}) and our proposed transfer procedure ({\sc{CFPT}}). The simulated patients extracted for this comparison are those that were observed to die where the {\sc Scratch} baseline is evaluated to have treated these patients sufficiently to be discharged while CFPT is more circumspect, being evaluated to have sustained the patient's life yet not able to move them to be discharged. These examples confirm the insight reported in the main text of the paper, that the policy learned through CFPT more closely approximates the observed behavior policy in a stable fashion while also seeing slight deviations that appear to contribute to keeping the patient's vitals within a healthy range. In comparison, the non-transfer baseline policy proposes far more aggressive treatments that, in off-policy evaluation, appear to be effective yes the patient's vitals rapidly fall out of a normal or healthy range as soon as all treatments are stopped.

\begin{figure*}
    \centering
    \includegraphics[width=0.75\textwidth]{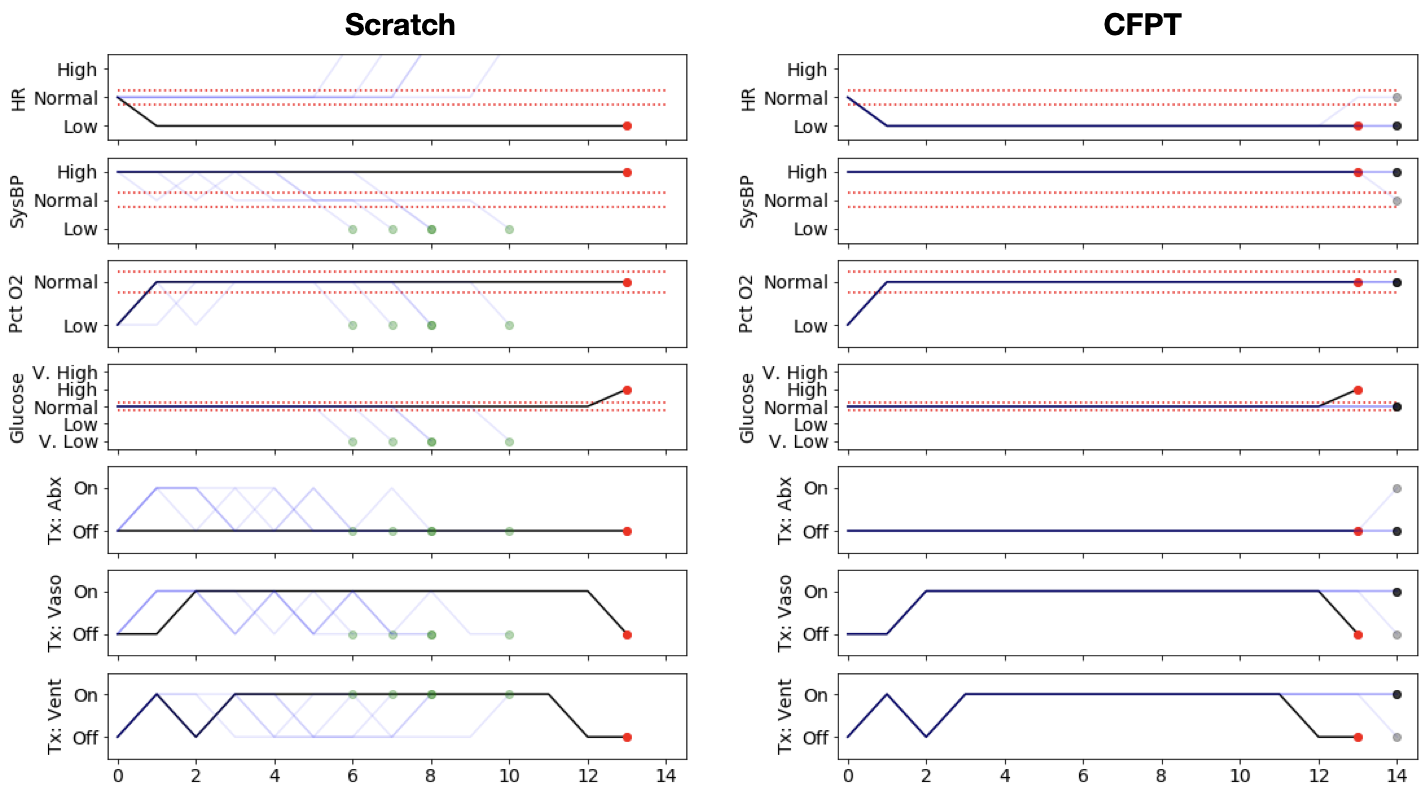}
    \caption{Introspective analysis of counterfactually sampled trajectories following the non-transfer baseline policy evaluation (left) compared with the evaluation of the proposed CFPT policy (right). This simulated patient is non-diabetic.}
    \label{fig:app_nondiab_pt}
\end{figure*}

To augment the presentation provided in Figure~\ref{fig:comp_cf_outcomes} we include four additional trajectory introspection figures. The first of which belongs to a non-diabetic patient (Figure~\ref{fig:app_nondiab_pt} recall, this is type of patient is found in lower proportion within the target environment) while the other three are diabetic patients (Figures~\ref{fig:app_diab_pt1}-~\ref{fig:app_diab_pt3}).

\begin{figure*}
    \centering
    \includegraphics[width=0.75\textwidth]{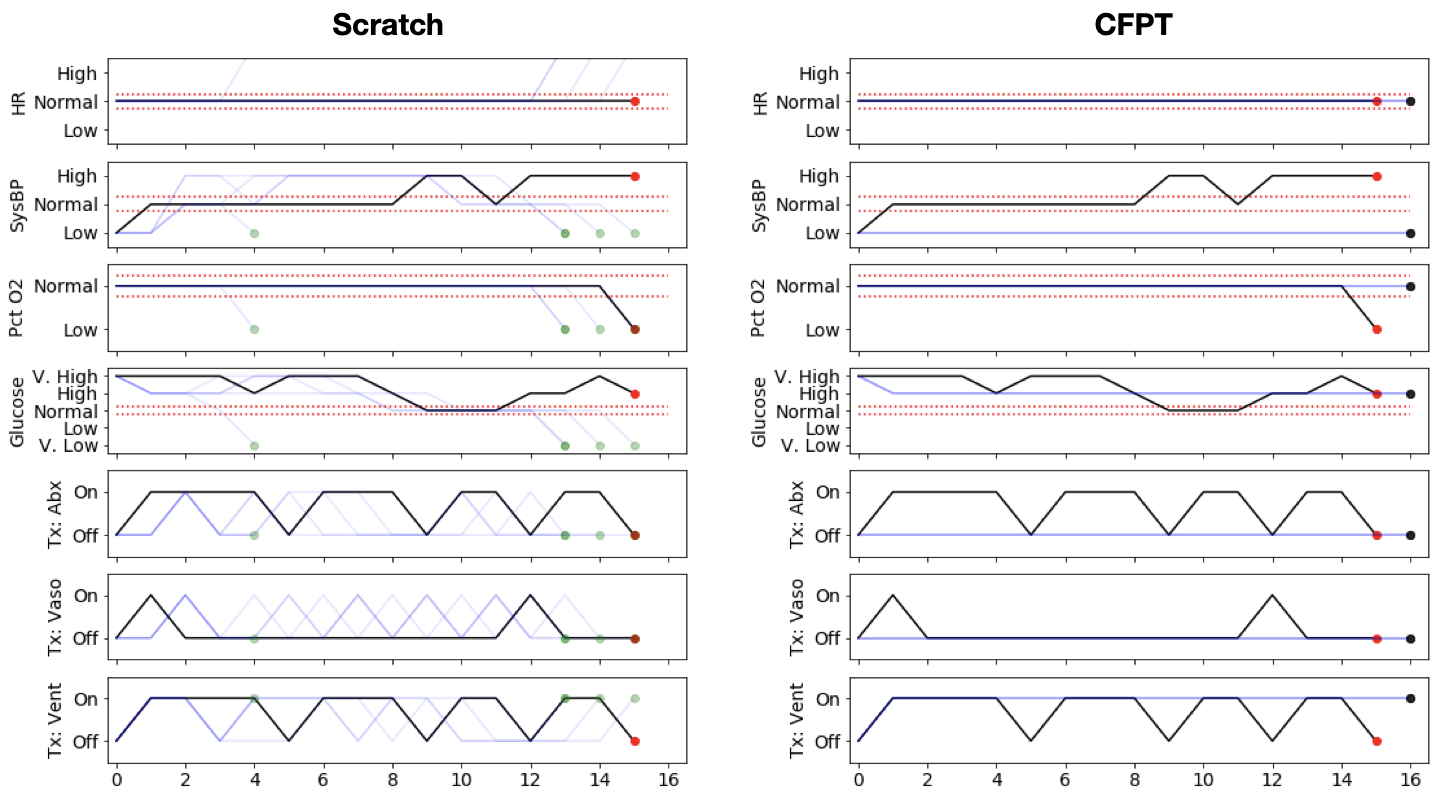}
    \caption{Introspective analysis of counterfactually sampled trajectories following the non-transfer baseline policy evaluation (left) compared with the evaluation of the proposed CFPT policy (right). This simulated patient is diabetic.}
    \label{fig:app_diab_pt1}
\end{figure*}

\begin{figure*}
    \centering
    \includegraphics[width=0.75\textwidth]{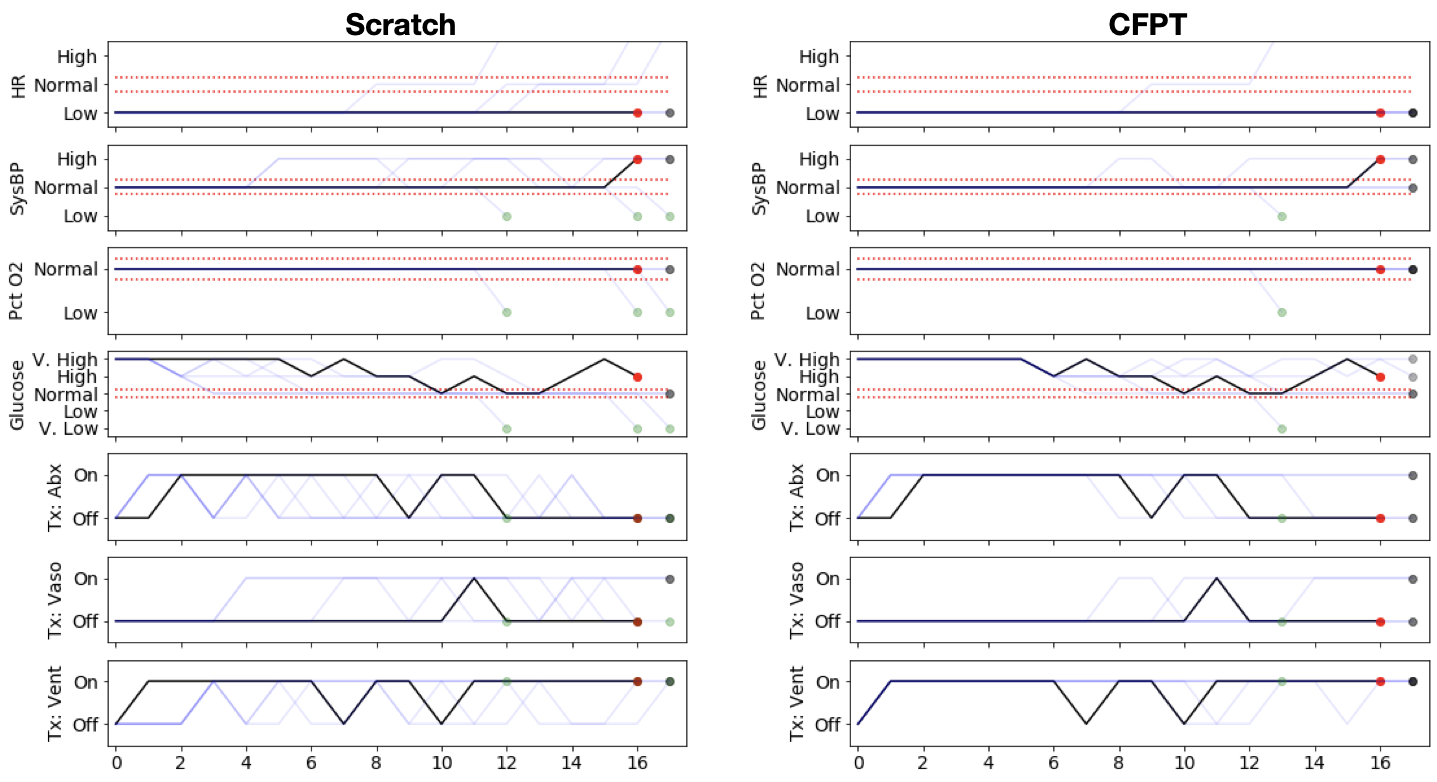}
    \caption{Introspective analysis of counterfactually sampled trajectories following the non-transfer baseline policy evaluation (left) compared with the evaluation of the proposed CFPT policy (right). This simulated patient is diabetic.}
    \label{fig:app_diab_pt2}
\end{figure*}

\begin{figure*}
    \centering
    \includegraphics[width=0.75\textwidth]{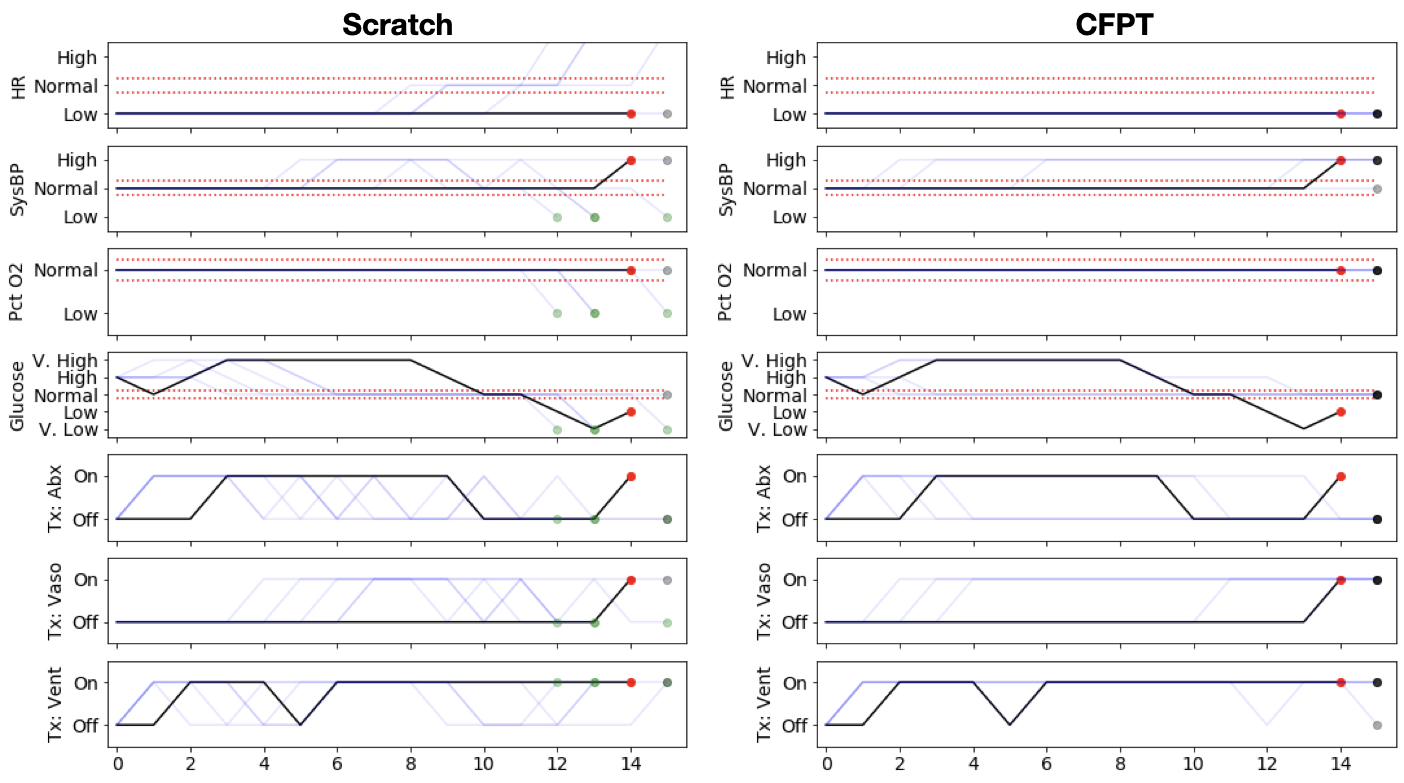}
    \caption{Introspective analysis of counterfactually sampled trajectories following the non-transfer baseline policy evaluation (left) compared with the evaluation of the proposed CFPT policy (right). This simulated patient is diabetic.}
    \label{fig:app_diab_pt3}
\end{figure*}

\end{document}